\def\BibTeX{{\rm B\kern-.05em{\sc i\kern-.025em b}\kern-.08em
    T\kern-.1667em\lower.7ex\hbox{E}\kern-.125emX}}
\newtheorem{theorem}{Theorem}
\newtheorem{lemma}{Lemma}
\begin{document}
\title{\vspace{0.21in} Selective Densification for Rapid Motion Planning in High Dimensions with Narrow Passages\\}

\author{
\IEEEauthorblockN{Lu Huang$^{1}$, \textit{Graduate Student Member, IEEE}, Lingxiao Meng$^{2}$, Jiankun Wang$^{2}$, \textit{Senior Member, IEEE},}
\IEEEauthorblockN{Xingjian Jing$^{1}$, \textit{Senior Member, IEEE}}
\thanks{$^{1}$Lu Huang and Xingjian Jing are with the Department of Mechanical Engineering, City University of Hongkong, Tat Chee Avenue, Kowloon, Hong Kong SAR.
{\tt\small (e-mail: \{lhuang98-c@my., xingjing@\}cityu.edu.hk)}}
\thanks{$^{2}$Lingxiao Meng and Jiankun Wang are with the Department of Electronic and Electrical Engineering, Southern University of Science and Technology, Shen Zhen, China.
{\tt\small (e-mail: \{menglx2021@mail.,wangjk@\}sustech.edu.cn)}}
}


\maketitle

\begin{abstract}
Sampling-based algorithms are widely used for motion planning in high-dimensional configuration spaces. 
However, due to low sampling efficiency, their performance often diminishes in complex configuration spaces with narrow corridors. 
Existing approaches address this issue using handcrafted or learned heuristics to guide sampling toward useful regions. 
Unfortunately, these strategies often lack generalizability to various problems or require extensive prior training.
In this paper, we propose a simple yet efficient sampling-based planning framework along with its bidirectional version that overcomes these issues by integrating different levels of planning granularity. 
Our approach probes configuration spaces with uniform random samples at varying resolutions and explores these multi-resolution samples online with a bias towards sparse samples when traveling large free configuration spaces. 
By seamlessly transitioning between sparse and dense samples, our approach can navigate complex configuration spaces while maintaining planning speed and completeness.
The simulation results demonstrate that our approach outperforms several state-of-the-art sampling-based planners in $\mathbb{SE}(2)$, $\mathbb{SE}(3)$, and $\mathbb{R}^{14}$ with challenging terrains.
Furthermore, experiments conducted with the Franka Emika Panda robot operating in a constrained workspace provide additional evidence of the superiority of the proposed method.
\end{abstract}

\def\abstractname{Note to Practitioners}
\begin{abstract}
Motion planning for robots with a high degree of freedom (DoF), such as humanoid robots and robotic arms, presents significant challenges due to the vast configuration space involved. Each joint adds a dimension, leading to an exponentially larger and more complex configuration space that is difficult to navigate.
To address these challenges, sampling-based techniques are commonly employed. These methods sample points in the configuration space to construct a graph or tree of potential motion paths. However, their computational complexity increases dramatically with environmental complexity. 
Obstacles can create narrow passages in the configuration space, requiring a large number of samples to ensure the connectivity of the underlying graph or tree and, therefore, the existence of solutions.
This paper proposes a novel sampling-based planner along with its bidirectional version that reduces search complexity by simultaneously exploring samples at various resolutions. 
The proposed planners efficiently find solutions in large free configuration spaces by utilizing sparse random samples. 
When navigating narrow passages, they transition to denser random samples to locate viable paths.
By integrating samples of different resolutions, the proposed planners effectively navigate high-dimensional configuration spaces with narrow passages. 
Simulation results demonstrate that our approach outperforms several state-of-the-art sampling-based planners across various scenarios.
The proposed planners, with their adaptability, can be easily implemented as a plug-and-play solution for diverse robotic systems, enabling them to perform autonomous navigation tasks in unstructured environments. 
\end{abstract}

\begin{IEEEkeywords}
Robotics, motion planning, path planning, sampling-based motion planning, high-dimensional motion planning, manipulator motion planning
\end{IEEEkeywords}

\section{Introduction}

\begin{figure}
\centering
  \subfloat[]{\includegraphics[width = 0.2\textwidth]{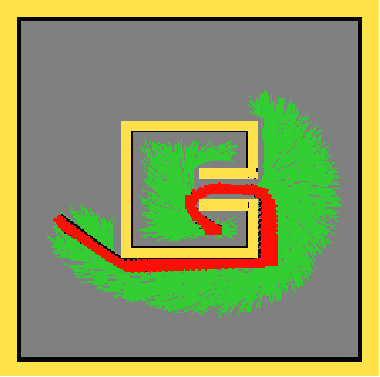}} \hspace{0.03\textwidth}
  \subfloat[]{\includegraphics[width = 0.2\textwidth]{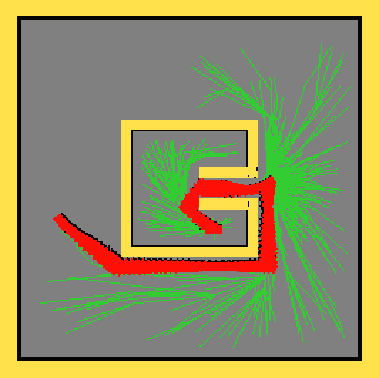}} 

  \subfloat[]{\includegraphics[width = 0.2\textwidth]{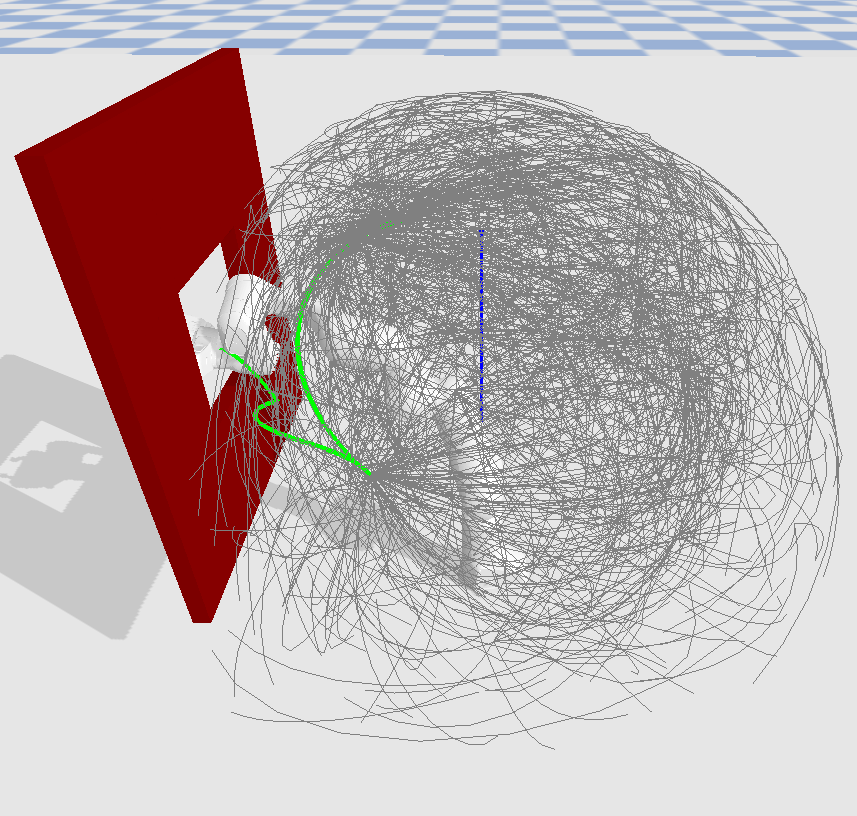}} \hspace{0.03\textwidth}
  \subfloat[]{\includegraphics[width = 0.2\textwidth]{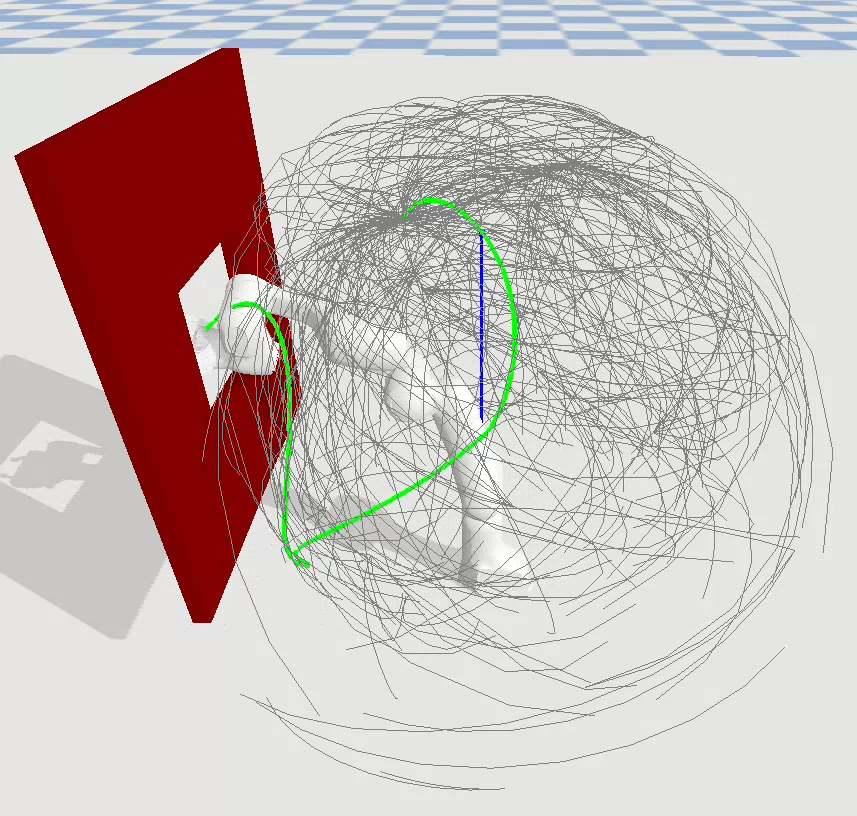}}

\caption{The search results of (a)(c) FMT$^*$ and (b)(d) MRFMT$^*$ in the Bug Trap problem and a 7-DoF manipulation problem are presented. 
In the 7-DoF manipulation problem, the Franka Emika Panda robot is requested to move its end effector across the small hole in the wall.
Both planners probe the free state space with the same number of uniform random samples. 
However, MRFMT$^*$ selectively densifies the search tree near narrow regions of the state space, enabling it to find a solution with a significantly sparser search tree.}
\label{fig:Visualization}
\end{figure}


Sampling-based algorithms, such as Probabilistic Roadmap (PRM) \cite{PRM}, Expansive Space Trees (EST) \cite{EST}, and Rapidly-Exploring Random Tree (RRT) \cite{RRT}, which probabilistically probe state spaces using uniform random samples, have proven to be highly effective in addressing motion planning tasks for high DoF robots under kinematic or dynamic constraints.
RRT$^*$ \cite{RRTstar} and its variants \cite{RRTsharp, RRTX} have shown enhancements in solution completeness and optimality. 
Fast Marching Tree (FMT$^*$) \cite{FMT} further improves efficiency by decoupling the search from the order of random samples. 
It employs forward dynamic programming recursion on a batch of random samples in order of heuristically estimated solution cost, resulting in more efficient exploration of problem domains.
However, for robotic systems operating within a restricted subset of the configuration space, influenced by factors such as environmental layout (e.g., initial and goal configurations, tight corridors), dynamic constraints (e.g., torque limitations), or implicit restrictions (e.g., loop closures), a large number of random samples is required to ensure the connectivity of the underlying search graph, as the probability of placing a random sample inside narrow passageways is minimal, resulting in extensive search cost and degraded performance of sampling-based planners. 

Some approaches strike a balance between completeness and efficiency by biasing samples towards narrow passageways or regions that could improve search efficiency.
Some perform geometric analysis on workspaces or configuration spaces to identify narrow passageways, which can be computationally expensive, especially for robots with complex dynamic or implicit constraints \cite{MediaAxis, WorkspaceImportance, DDRRT, VolumnRRT, Entropy, Utility, InformedRRT}. 
Others avoid explicit geometric description by learning implicit representation of configuration spaces through past planning instances \cite{WeightingFeature, HybridSampling, Bayesian, Learning, LEGO, MDP, CVAE, LocalPremitive, SPARKandFLAME}. 
However, these strategies often encounter limitations in generalizing to diverse problems or require laborious offline training to ascertain the model parameters.

Recently, multi-resolution planning has been proposed \cite{MRA, SelectiveDensification, MultilevelSparseRoadmap}, which approximates configuration spaces using roadmaps with varying densities. 
This method enables fast performance by primarily exploring sparser roadmaps and only densifying them when necessary (e.g., navigating through narrow state spaces). 
Although the existing approaches bypass the need for prior environmental knowledge or offline training, they still require advanced discretization of configuration spaces or well-crafted heuristic functions to guide exploration effectively, making them impractical for online motion planning in complex problem domains.

We aim to develop a general, efficient approach for fast online motion planning in high-dimensional restricted configuration spaces. 
To this end, we propose the Multi-Resolution FMT$^*$ (MRFMT$^*$) algorithm, which performs biased exploration of configuration spaces online without a prior discretization of configuration spaces or problem-specific heuristics by combining the advantages of random samples with different resolutions.
Specifically, MRFMT$^*$ applies multiple FMT$^*$ searches running on uniformly random samples with different resolutions simultaneously.
It searches for solution paths mainly over sparse random samples and shifts to denser samples only in areas where the underlying search graph of sparse samples does not contain a feasible path.
This selective densification scheme ensures that the planner efficiently navigates the broader landscape and spends most computational efforts in challenging regions.
Moreover, we introduce a bidirectional version of this approach, Bidirectional MRFMT$^*$ (BMRFMT$^*$), to further improve planning speed in high-dimensional state spaces. 
We visualize the search results in the Bug Trap problem in Fig.\ref{fig:Visualization}.

This paper is structured as follows. 
In section \ref{Section:RelatedWorks}, we review related works.
We present background introductions and detailed descriptions for the proposed methods in Section \ref{Section:Algorithm}. 
We prove the completeness and asymptotic optimality of MRFMT$^*$ in Section \ref{Section:Analysis}.
Section \ref{Section:Simulation} presents simulation results comparing the proposed method with various state-of-the-art planners to verify the advantages of our method numerically. 
In Section \ref{Section:Experiment}, we further illustrate the superiority of the proposed method in practical applications by applying it to the manipulation task of the Franka Emika Panda robot operating in a highly constrained workspace.
Section \ref{Section:Discussions} discusses the extension of the proposed algorithms to planning under differential constraints and examines the effects of the algorithm parameters on performance.
At the end of this paper, we discuss the results and make conclusions.

\section{Related Works}
\label{Section:RelatedWorks}
\subsection{Biased Sampling}
Over the past few decades, biased sampling strategies have been the subject of extensive research. 
Several geometric methods focus on analyzing the medial axis \cite{MediaAxis} or performing tetrahedralization \cite{WorkspaceImportance} of the workspace. 
These techniques help identify constrained areas and construct probability distributions to sample them preferentially \cite{DynamicRegionRRT}.
In \cite{Entropy, Utility}, the informativeness of new samples is assessed using geometry-based utility functions designed to maximize roadmap coverage and connectivity. 
Some approaches employ biased sampling based on the structure of the search tree. 
For instance, Dynamic-Domain RRT biases samples towards obstacle boundaries and tree nodes \cite{DDRRT}, while the Ball Tree algorithm approximates the free state space using size-varying balls around the search tree \cite{VolumnRRT}.
Instead of relying on uniform sampling to ensure solution completeness, Informed RRT$^*$ \cite{InformedRRT} takes a different approach by deterministic sampling from a prolate hyperspheroid defined by the current solution cost, thus encompassing all potential samples that could improve the solution.

Geometric analysis of high-dimensional configuration spaces can be computationally intensive, particularly for robotic systems with complex dynamic constraints or implicit limitations. 
To address this challenge, various learning-based approaches have been explored.
Zucker et al. \cite{WeightingFeature} propose a method for learning a locally optimal weighting of workspace features, thereby adjusting the significance of different features when sampling. 
Hsu et al. introduce a hybrid sampling strategy combining multiple samplers, each selectively activated based on probabilities learned from experience \cite{HybridSampling}. Similarly, Lai et al. frame the sequential sampling problem as a Markov process and apply sequential Bayesian updating to refine the local proposal distribution of sampling points based on past observations \cite{Bayesian}.
Rather than relying on past experience, in \cite{Vonasek}, the extension of the search tree is guided by a solution of a relaxed version of the original problem.

Recent advancements in deep learning have led to efforts focused on learning latent representations of configuration spaces and inferring efficient sampling distributions using neural networks. 
In particular, the robot state and environment are mapped to a distribution over paths, as demonstrated in \cite{Learning, NeuralRRT}. 
Ichter et al. \cite{CVAE} train a Conditional Variational AutoEncoder (CVAE) to transform a uniform Gaussian distribution into a biased sampling distribution that favors the predicted shortest path based on the environmental layout.
The LEGO framework \cite{LEGO} alleviates the learning burden by training a CVAE to generate bottleneck samples through which a near-optimal path must pass. 
Additionally, in \cite{LocalPremitive, SPARKandFLAME}, a batch of local samplers is trained to capture workspace features that create "challenging regions" within the configuration space. These local samplers are combined to generate a biased global sampler, enabling effective planning in workspaces of arbitrary sizes.

Some research approaches view biased sampling as a decision-making process and employ reinforcement learning methods to learn the optimal policy online. 
For instance, in \cite{MDP}, the optimal sampling distribution is converted into a sample rejection procedure and framed as a Markov Decision Process (MDP), which is modeled using a fully connected neural network.
RRF$^*$ \cite{RRF} expands multiple local search trees simultaneously from different roots and learns the optimal tree expansion sequences through a Multi-Armed Bandit (MAB) formulation. 
Faroni and Berenson enhance kinodynamic RRT by applying the MAB algorithm to learn the optimal transition sequence \cite{MAB_kinoRRT}.

While learning-based techniques show promise in integrating environmental and system knowledge to enhance sampling-based motion planning, their substantial offline training requirements may limit their practicality for tasks that demand a plug-and-play planner. Moreover, many of these approaches necessitate high-performance computing platforms, which can significantly increase overall system costs.

\subsection{Multi-resolution Planning}
Multi-resolution planning was initially adopted by some graph-based approaches \cite{Likhachev, Smooth}, which discretizes the configuration space to a multi-resolution lattice tailored to specific environments and problems in prior in order to accelerate online planning speed.  
The Multilevel Sparse Roadmap algorithm\cite{MultilevelSparseRoadmap} adapts multi-resolution planning for online use by combining roadmaps at various resolutions. 
It precomputes a sequence of roadmaps with an increasing number of evenly distributed edges and iteratively searches for the shortest path from the sparsest to the densest roadmap. 
This approach enables a robot to quickly find an initial path, which can then be refined to yield a more optimal solution. 
Nonetheless, a significant drawback is that the search expense can be particularly high if no path exists in the sparse roadmaps, even though the dense roadmaps are only crucial in limited regions.

Search efficiency can be improved by making the sparse and dense roadmaps complementary. 
Several approaches selectively densify specific regions while primarily searching over sparse roadmaps \cite{MRA, SelectiveDensification}. 
The MRA$^*$ algorithm \cite{MRA} employs multiple searches across different roadmaps. 
When the search on coarser roadmaps is inadmissible, MRA$^*$ switches to the finest resolution for further exploration. 
However, this reliance on the finest resolution can dominate the planning process, even when it can not significantly enhance the solution quality.
In contrast, the Selective Densification algorithm \cite{SelectiveDensification} connects roadmaps using zero-cost cross-layer edges and searches the combined graph with the Weighted A$^*$ algorithm \cite{WA}. 
The heuristics for each layer are scaled by an inflation factor proportional to the layer's density, which biases the search toward sparser layers unless the solution path requires traversing denser roadmap samples.
Nonetheless, the inflation parameters must be carefully chosen beforehand to ensure optimal performance, as a too-small parameter leads to costly searches over the entire layered graph, while a too-large one results in a search being overly dominated by heuristics.

As a multi-resolution motion planner, our approach does not require offline training or prior knowledge of the environment.
Distinguished from the existing schemes, ours avoids the need for in prior configuration space discretization or roadmap construction.
Instead, it probes the configuration space probabilistically by random samples and constructs a search graph over the random samples online. 
Furthermore, our approach eliminates the need for manually tuned parameters to guide the search across different resolutions, enhancing its practicality for real-world applications.
Ours also adopts a lazy collision check scheme when expanding the search graph, which significantly reduces the number of edge evaluations during the whole planning procedure, cutting down the overall search time for a feasible solution compared with the previous approaches.

\section{Multi-resolution Sampling-based Planners}
\label{Section:Algorithm}

Let $\mathcal{X}$ be a $d$-dimensional configuration space ($d\geq 2$) with free space $\mathcal{X}_{free}$ and configurationspace obstacles $\mathcal{X}_{obs} = cl(\mathcal{X}\backslash \mathcal{X}_{free})$. 
A path planning problem can then be characterized by a triplet $(\mathcal{X}_{free}, x_{init}, \mathcal{X}_{goal})$.
A feasible solution path $\pi$ for the planning problem $(\mathcal{X}_{free}, x_{init}, \mathcal{X}_{goal})$ is an ordered set of collision-free configurations from the initial condition $x_{init}\in\mathcal{X}_{free}$ to the goal region $\mathcal{X}_{goal}\subset\mathcal{X}_{free}$, i.e., $\pi(0) = x_{init}$ and $\pi(1)\in \mathcal{X}_{goal}$.
Denote the set of all paths by $\Sigma$.
A cost function for the planning problem is $c: \Sigma\rightarrow \mathbb{R}_{\geq 0}$. 
Let $\Pi$ be the set of all feasible solutions. The optimal solution $\pi^*$ is the one with $c(\pi^*) = min_{\pi\in \Pi}c(\pi)$. 
Our goal is to find a feasible solution path with a cost close to $\pi^*$.
The motion planner should report failure to the user if no feasible path exists.



\subsection{High-level description}
MRFMT$^*$ stands out in its approach to approximating configuration spaces. It does so at the beginning of planning using uniform random samples generated at multiple resolutions, a unique method that distinguishes it from traditional sampling-based planners. 
Traditional sampling-based planners either employ an incremental sampler that generates one sample \cite{RRT, PRM, RRTstar} or a batch of samples \cite{BIT, Densification} at a time during planning to refine their approximation of the configuration space. 
Alternatively, some planners approximate the configuration space in advance using a predetermined number of random samples \cite{FMT}.
One of the key challenges faced by traditional planners is the significant increase in average neighboring size and overall search cost as the total number of samples in the graph increases. 
This is particularly problematic for problems where a dense underlying graph is necessary for covering narrow passages.
MRFMT$^*$ addresses this challenge by conducting a multi-resolution search, simultaneously exploring samples at different resolutions. 
It prioritizes searching the coarsest samples during the planning procedure, transitioning to finer resolution samples only when the coarsest samples fail to produce a feasible solution.
The concept of prioritizing coarsest samples to mitigate the complexity of high-resolution graph searches was also adopted by EIRM$^*$ \cite{EIRM}, a multi-query sampling-based planner that resets to the coarsest samples at the start of each new query. However, EIRM$^*$ still relies on incrementally densifying the graph to find feasible solutions during queries.

MRFMT$^*$ performs local planning from the initial sample to iteratively identify a solution path. 
In particular, it applies the forward dynamic programming recursion proposed by FMT$^*$ \cite{FMT} across the multi-resolution samples to generate a tree of paths that gradually expand in cost-to-come space. 
This recursion expands the sample with the lowest estimated solution cost and selectively establishes locally optimal connections for its neighboring samples.
If a locally optimal connection (assuming no obstacles) intersects an obstacle, that neighboring sample is skipped and revisited later instead of seeking alternative locally optimal connections in the vicinity like RRT$^*$.
Since the forward dynamic programming recursion evaluates only the edges with locally optimal solution costs, the overall number of edge evaluations of FMT$^*$ is significantly reduced while ensuring asymptotically optimal solutions.
The difference from FMT$^*$ is that MRFMT$^*$ constrains expansion to the samples with the same resolution, which further reduces the number of neighboring queries and edge evaluations.
The forward dynamic programming recursion enables MRFMT$^*$ to simultaneously conduct fast graph construction and solution search, in contrast to previous multi-resolution planners \cite{MultilevelSparseRoadmap, MRA, SelectiveDensification} that must either separate these phases or perform numerous expensive edge evaluations.

Bidirectional search is a widely used technique to enhance the performance of motion planners and has demonstrated its efficiency in various applications \cite{RRTconnect, BFMT, SelectiveDensification}. 
The idea of bidirectional search is to simultaneously propagate two search wavefronts initiated from the start and the goal samples, aiming to meet in between. 
BMRFMT$^*$ combines the bidirectional search strategy with the proposed multi-resolution sampling-based planner to enhance planning efficiency in high-dimensional configuration spaces.

\subsection{Multi-resolution Random Samples}
\label{RandomSampleSetDef}
Consider a sequence of $N$ unique configurations uniformly sampled from $\mathcal{X}$, denoted as $(x_1, x_2, \cdots, x_N)$, along with a strictly increasing sequence of $L$ positive constants, $(0<n_1 < n_2 < \cdots <n_L = N)$, where $L<<N$. 
The random sample set with the $l^{th}$ dense sparsity level, denoted as $X_l$, consists of the samples representing the first $n_l$ configurations. 
Additionally, we include the goal and initial configuration samples $x_{goal}$ and $x_{init}$ in every sample set.
For each sample set $X_l$, we calculate the sample connection radius $r_l$ or the number of neighbors $k_l$ as a function of the sample set size $n_l$ by following the same principles as Rapidly-Exploring Random Graph (RRG) or kPRM \cite{RRTstar}.
Let $x_i^l$ represent a sample from $X_l$. 
The counterpart samples $x_i^{l-1}$ and $x_i^{l+1}$ correspond to samples with the same configuration but in adjacent resolutions. 
Note that $x_i^{l-1}$ does not always exist as $n_{l-1} < n_l$.
We define zero-cost edges between these counterparts, allowing MRFMT$^*$ to traverse among samples with different resolutions. 
The neighbors of a sample $x_i^l$, therefore, contain
(i) the samples within the same resolution that are either within a distance $r_l$ from $x_i^l$ in an r-disk underlying search graph or among the $k_l$ nearest neighbors in a k-nearest underlying search graph; 
(ii) the counterpart samples $x_i^{l-1}$  (for $l > 1$) and $x_i^{l+1}$ (for $l < L$).

\subsection{MRFMT$^*$ - Detailed Description}
\label{MRFMT_detail}

\begin{figure}
    \centering
    \subfloat[]{\includegraphics[width=0.15\textwidth]{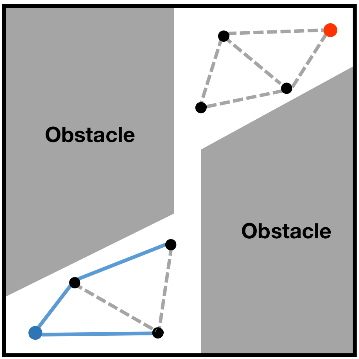}}\hfill
    \subfloat[]{\includegraphics[width=0.15\textwidth]{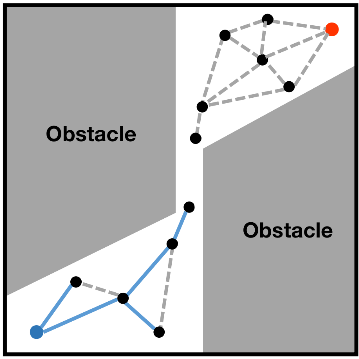}}\hfill
    \subfloat[]{\includegraphics[width=0.15\textwidth]{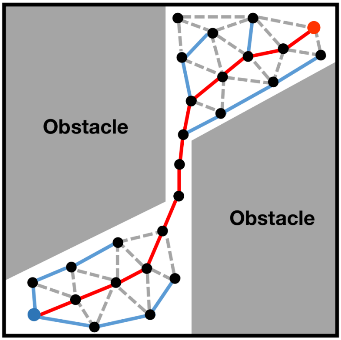}}
    \caption{The search results of FMT$^*$ over (a) sparse, (b) medium, and (c) dense approximations of a 2D environment with a narrow corridor. Over the sparse and medium approximations, FMT$^*$ fails to find a solution due to the disconnectedness of the underlying search graphs. }
    \label{fig:2D_narrow_corridor}
\end{figure}

\begin{algorithm}[htbp]
\setstretch{1.0}
    \caption{MRFMT$^*$}
    \label{MRFMT}
    \SetKwFunction{Initialize}{Initialize}
    \SetKwFunction{SampleFree}{SampleFree}
    \SetKwFunction{Neighbor}{Neighbor}
    \SetKwFunction{Expand}{Expand}
    \SetKwFunction{Main}{Main}
    \SetKwFunction{Path}{Path}
    \SetKwFunction{Cost}{Cost}
    \SetKwFunction{ChooseExpansionNode}{ChooseExpansionNode}
    \SetKwFunction{PTC}{PTC}
    \SetKwFunction{Swap}{Swap}
    \SetKwFunction{AddNode}{AddNode}

    \SetKwProg{Fn}{Function}{:}{}
    \newcommand{\algrule}[1][.2pt]{\par\vskip.1\baselineskip\hrule height #1\par\vskip.1\baselineskip}
    
    \textbf{Input:} $(\mathcal{X}_{free}, x_{init}, \mathcal{X}_{goal})$, $c(\cdot)$, $L$, $\mathbf{X}=\{X_l\}_{l=1,\cdots,L}$\;
    
        $\mathcal{T}\leftarrow$ \Initialize{$\mathbf{X}$, $x_{init}^1$}\; \label{initialize}
        
        $z\leftarrow$ $x_{init}^1$ \Comment{The wavefront node for expansion} \; \label{initialize_z}
        $p\leftarrow 1$ \Comment{The sparsity level pointer} \; \label{initialize_pointer} 
        \While{not \PTC{}}
        {
            \If{not \Expand{$\mathcal{T}$, $z$, $p$}} 
            {
                \label{checkExpansionSuccessful}
                break \; \label{break}
            }
            $z\leftarrow \arg\min_{x\in V_{open, p}}\{\Cost(\mathcal{T}, x)\}$ \; \label{MAIN:updateZ}
            
        }

        \eIf{$z\in \mathcal{X}_{goal}$}
        {
            \KwRet{\Path{$\mathcal{T}$, $z$}}\; \label{Track_Path}
        }
        {
            \KwRet{Failure} \; \label{Report_Failure}
        }

    \algrule
    \Fn{\Initialize{$\mathbf{X}$, $x$}}{
      $V\leftarrow\{x\}$,$E\leftarrow\phi$,
      $V_{unvisited}\leftarrow \mathbf{X}\backslash\{x\} $\; 
      $V_{open,1}\leftarrow\{x\}$, $V_{open,l}\leftarrow\phi$ for $l=2,\cdots, L$ \;}
      \KwRet{($V$, $E$}, $V_{unvisited}$, $\{V_{open,l}\}_{l=1,\cdots, L}$)\;

\end{algorithm}    

\begin{algorithm}[htbp]
\setstretch{1.0}
    \caption{BMRFMT$^*$}
    \label{BMRFMT}
    \SetKwFunction{Initialize}{Initialize}
    \SetKwFunction{SampleFree}{SampleFree}
    \SetKwFunction{Neighbor}{Neighbor}
    \SetKwFunction{Expand}{Expand}
    \SetKwFunction{Main}{Main}
    \SetKwFunction{Path}{Path}
    \SetKwFunction{Cost}{Cost}
    \SetKwFunction{ChooseExpansionNode}{ChooseExpansionNode}
    \SetKwFunction{PTC}{PTC}
    \SetKwFunction{Swap}{Swap}
    \SetKwFunction{AddNode}{AddNode}

    \SetKwProg{Fn}{Function}{:}{}
    \newcommand{\algrule}[1][.2pt]{\par\vskip.1\baselineskip\hrule height #1\par\vskip.1\baselineskip}
    
    \textbf{Input:} $(\mathcal{X}_{free}$, $x_{init}$, $\mathcal{X}_{goal})$, $c(\cdot)$, $L$, $\mathbf{X}=\{X_l\}_{l=1,\cdots,L}$\;
        $\mathcal{T}\leftarrow$ \Initialize{$\mathbf{X}$, $x_{init}^1$}\; \label{BMRFMT:initialize}
        {\color{blue}
        $\mathcal{T}'\leftarrow$ \Initialize{$\mathbf{X}$, $x_{goal}^1$} \; 
        \label{BMRFMT:initialize_}
        }
        
        $z\leftarrow$ $x_{init}^1$ {\color{blue}, $x_{meet}\leftarrow \phi$} \; \label{BMRFMT:initialize_z}
        $p\leftarrow 1$ {\color{blue}, $p'\leftarrow 1$}\; \label{BMRFMT:initialize_pointer}
        \While{not \PTC{}}
        {
            \eIf{not \Expand{$\mathcal{T}$, $z$, $p$ \textcolor{blue}{, $\mathcal{T}'$, $x_{meet}$}}} 
            {
                {\color{blue}
                \label{BMRFMT:checkExpansionSuccessful}
                \eIf{$V_{open,p'}'\neq \phi$} 
                {
                    \label{BMRFMT:SwapTreeIfCurrentFailsBegins}
                    \Swap{$\mathcal{T}, \mathcal{T}'$}\; 
                    \label{BMRFMT:SwapTreeIfCurrentFailsEnds}
                }
                {
                     break \; \label{BMRFMT:break}
                }     
                }
            }
            {\color{blue}
            {
                \If{$V_{open,p'}'\neq \phi$}
                {
                    \label{BMRFMT:SwapIfTheOtherTreeIsNotEmptyBegins}
                    \Swap{$\mathcal{T}, \mathcal{T}'$}\; 
                    \label{BMRFMT:SwapIfTheOtherTreeIsNotEmptyEnds}
                }
            }
            }
            $z\leftarrow \arg\min_{x\in V_{open, p}}\{\Cost(\mathcal{T}, x)\}$ \; \label{BMRFMT:updateZ} 
        }
        \color{blue}{
        \eIf{$x_{meet}\neq \phi$}
        {
            \label{BMRFMT:track_solution_path_begins}
            \KwRet{\Path{$\mathcal{T}$, $x_{meet}$} $\cup$ \Path{$\mathcal{T}'$, $x_{meet}$}}\; 
            \label{BMRFMT:track_solution_path_ends}
        }
        {
            \KwRet{Failure} \; \label{BMRFMT:report_failure}
        }
        }

\end{algorithm}    

\begin{algorithm}[ht]
\setstretch{1.0}
    \caption{Expand($\mathcal{T}, z, p$ \textcolor{blue}{, $\mathcal{T}', x_{meet}$})}
    \label{Expand}
    \SetKwFunction{CollisionFree}{CollisionFree}
    \SetKwFunction{Resolution}{Resolution}
    \SetKwFunction{Neighbor}{Neighbor}
    \SetKwFunction{Expand}{Expand}
    \SetKwFunction{Cost}{Cost}
    \SetKwFunction{AddNode}{AddNode}

    \SetKwProg{Fn}{Function}{:}{}
    \newcommand{\algrule}[1][.2pt]{\par\vskip.1\baselineskip\hrule height #1\par\vskip.1\baselineskip}

    $V_{new}\leftarrow\phi$  \hspace{0.05in} // A node set for saving updated nodes\;
    $Z_{near}\leftarrow$\Neighbor{$z$}$\cap V_{unvisited}$\; \label{unvisited}
    \For{$x\in Z_{near}$}
    {
        $X_{near}\leftarrow Neighbor(x)\cap V_{open, p}$\; \label{find_X_near}
        $x_{min}\leftarrow \arg\min_{x'\in X_{near}}\{$\Cost{$\mathcal{T}, x'$} $+\hat{c}(x',x)\}$\; \label{find_x_min}
        \If{\CollisionFree{$x_{min}, x$}}
        {
            \label{edge_evaluation}
            \AddNode{$V, E, V_{unvisited}, V_{new}, x_{min}, x$}\;\label{addnode}
            {\color{blue} 
                \If{$x\in V' \ \And $ 
                \Cost{$\mathcal{T}, x$} + \Cost{$\mathcal{T}', x$} <
                \Cost{$\mathcal{T}, x_{meet}$} + \Cost{$\mathcal{T}', x_{meet}$}
                }{
                    \label{Expand:UpdateXmeetBegin}
                    $x_{meet}\leftarrow x$ \;
                } \label{Expand:UpdateXmeetEnd}
            }
        }
    }
    $V_{open,p}\leftarrow V_{open,p} \backslash \{z\}$ \; \label{remove_z_from_open}
    
    \For{$v\in V_{new}$}
    {
        \label{Add_update_to_open_begin}
        $l\leftarrow$\Resolution{$v$}\;
    
        $V_{open,l}\leftarrow V_{open,l} \cup \{ v \}$ \; \label{Add_update_to_open}

        \textbf{if} $l<p$ \textbf{then} $p\leftarrow l$ \; \label{Decrease_Pointer}
    }

    \While{$V_{open, p}=\phi$}{ 
        \eIf{$p<L$}
        {
            $p\leftarrow p+1$ \; \label{Increase_Pointer}
        } 
        {
            \KwRet{False} \hspace{0.05in} \Comment{Failed expansion}\; \label{terminate}
        }
    }
    
    \KwRet{True} \hspace{0.05in} \Comment{Successful expansion} \; 
    \algrule
    \Fn{\AddNode{$V, E, V_{unvisited}, V_{open}, x', x$}}{
          $V\leftarrow V\cup\{x\}$, 
          $E\leftarrow E\cup\{(x', x)\}$\;
          $V_{open}\leftarrow V_{open}\cup\{x\}$\;
          $V_{unvisited}\leftarrow V_{unvisited}\backslash\{x\}$\; \label{remove_unvisited}
     }
     
\end{algorithm}    

The main algorithm of MRFMT$^*$ is presented in Algorithm \ref{MRFMT}. Throughout the search, the algorithm maintains a tree $\mathcal{T}$. It stores successful connections in a sample set $V$, storing the samples connected to the tree, and an edge set $E$, storing pairs of tree samples with feasible motions between them.
MRFMT$^*$ associates each sample set with a priority queue $V_{open,l}$ where $l$ is the sparsity level of the sample set, to track the search wavefront of $\mathcal{T}$ in sorted order of estimated solution cost (e.g., cost-to-come).
The set $V_{unvisited}$ contains all unvisited samples.
To simplify notation, $\mathcal{T}$ is referred to as a quaternion $(V, E, V_{unvisited}, \{V_{open, l}\}_{l=1,\cdots, L})$.

Before diving into the algorithm details, we briefly list the functions employed by the algorithm. 
The function \FuncSty{SampleFree($N$)} returns a set of $N$ i.i.d. random samples from $X_{free}$. 
\FuncSty{Resolution($x$)} returns the sparsity level of $x$. 
$\hat{c}(x,y)$ returns the solution cost from $x$ to $y$ in the assumption that the motion between $x$ and $y$ is collision-free.
\FuncSty{Neighbor($x$)} returns the neighboring samples of $x$ as defined in Section \ref{RandomSampleSetDef}.
\FuncSty{PATH($\mathcal{T}, x$)} returns the unique path in $\mathcal{T}$ from its root to sample $x$, and
\FuncSty{Cost($\mathcal{T}, x$)} returns the cost of the path. 
\FuncSty{CollisionFree($x, y$)} is a boolean function returning true if the movement between configurations $x$ and $y$ is collision-free. 
\FuncSty{Swap($\mathcal{T}, \mathcal{T}'$)} swaps the two trees $\mathcal{T}$ and $\mathcal{T}'$. 
\FuncSty{PTC()} represents the problem termination criterion (e.g., the search time runs out, a solution from $x_{init}$ to $\mathcal{X}_{goal}$ is found, etc.).

We are now ready to explain the algorithm in detail. 
MRFMT$^*$ starts by initializing a search tree $\mathcal{T}$ rooted at $x_{init}^1$ using the \FuncSty{Initialize} procedure (line \ref{initialize}, Algorithm \ref{MRFMT}), where $x_{init}^1$ is the sample of $X_1$ which corresponds to the initial configuration. 
\text{\color{black}$x_{init}^1$} is inserted into $V_{open, 1}$ for expansion.
Once initialization is complete, MRFMT$^*$ begins expanding $\mathcal{T}$ using the \FuncSty{Expand} procedure. 
We denote $p$ as the sparsity level over which MRFMT$^*$ is searching for expansion, which is initialized to be 1 (line \ref{initialize_pointer}, Algorithm \ref{MRFMT}).
The \FuncSty{Expand} procedure finds the locally optimal connection for the unvisited neighboring samples of $z$, which is the lowest-cost open sample at the current sparsity level $p$ and is initially set to $x_{init}^1$ for the first search (line \ref{initialize_z}, Algorithm \ref{MRFMT}). 
For each unvisited neighboring sample $x\in Z_{near}$, the \FuncSty{Expand} procedure firstly searches its neighboring samples with sparsity level $p$ and in the open set (line \ref{find_X_near}, Algorithm \ref{Expand}).
The edge between $x$ and the open sample with the minimal heuristically-estimated solution cost, $x_{min}$, is evaluated (line \ref{edge_evaluation}, Algorithm \ref{Expand}). 
If the edge is collision-free, $x$ is added to the search tree and removed from $V_{unvisited}$ (line \ref{addnode}, Algorithm \ref{Expand}); otherwise, $x$ is skipped.
Any updated sample is opened and inserted into its associated priority queues (line \ref{Add_update_to_open_begin}-\ref{Add_update_to_open}, Algorithm \ref{Expand}).
If there is an updated sample with a sparser resolution, 
$p$ is then decreased (line \ref{Decrease_Pointer}, Algorithm \ref{Expand}), and therefore, the next \FuncSty{Expand} procedure will traverse to the sparser resolution. 
If $V_{open, p}$ is empty after an expansion, indicating that there are no open samples for expansion at the current sparsity level, MRFMT$^*$ will turn to the denser resolution by increasing $p$ by 1 (line \ref{Increase_Pointer}, Algorithm \ref{Expand}).
If there is no wavefront node, i.e., $V_{open, l}$ is empty for all $l=1,\cdots,L$, the search will terminate (line \ref{terminate}, Algorithm \ref{Expand}). 
A solution path can be traced back from $z$ if $z \in \mathcal{X}_{goal}$ (line \ref{Track_Path}, Algorithm \ref{MRFMT}).

\begin{figure*}
\centering
  \subfloat[MRFMT$^*$ is initialized with $p=1$. ]{\includegraphics[width = 0.24\textwidth]
  {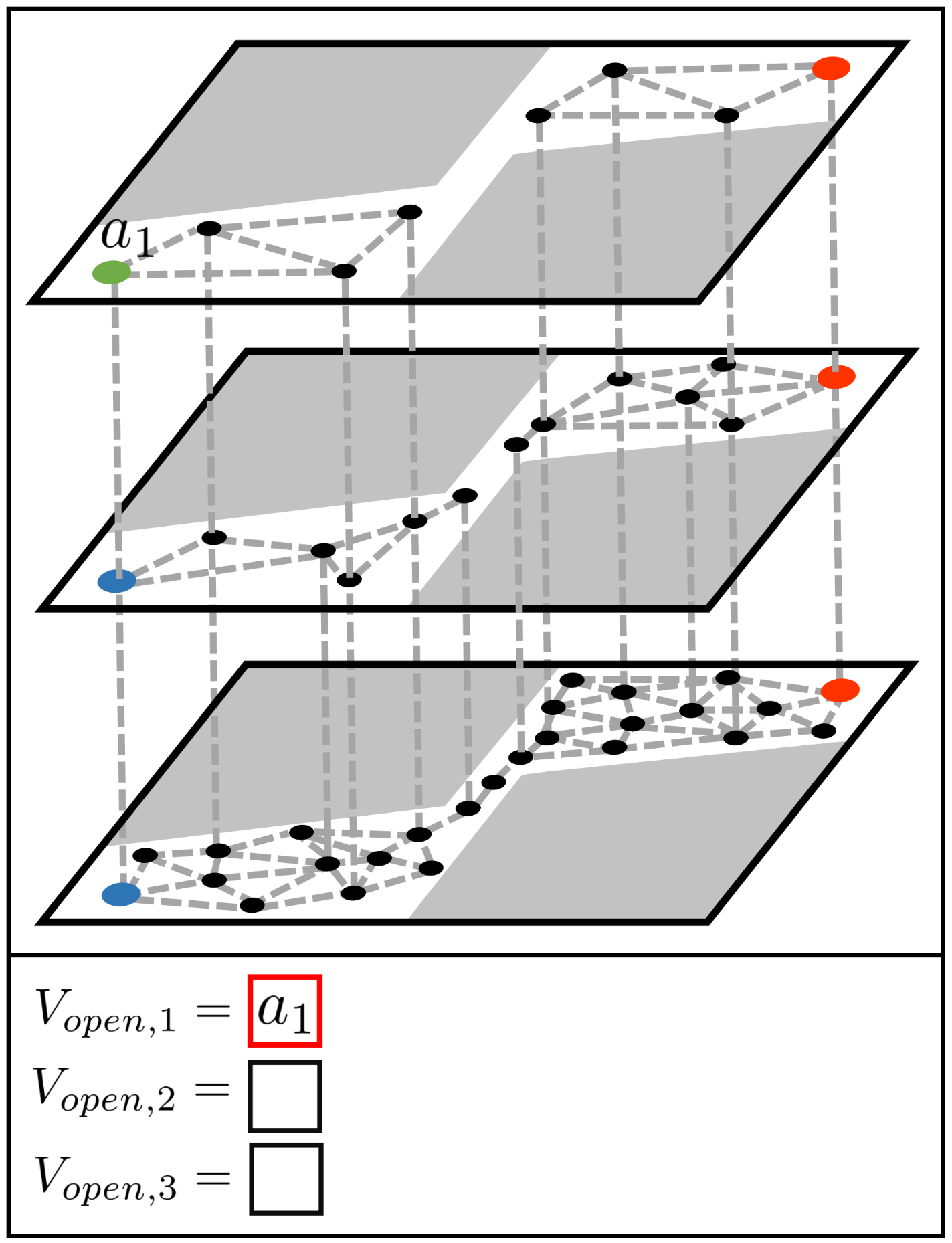}}\hfill
  \subfloat[$a_1$ to $d_1$ were iteratively expanded and closed. $V_{open, 1}$ is empty, and therefore $p$ is increased to 2.]{\includegraphics[width = 0.24\textwidth]
  {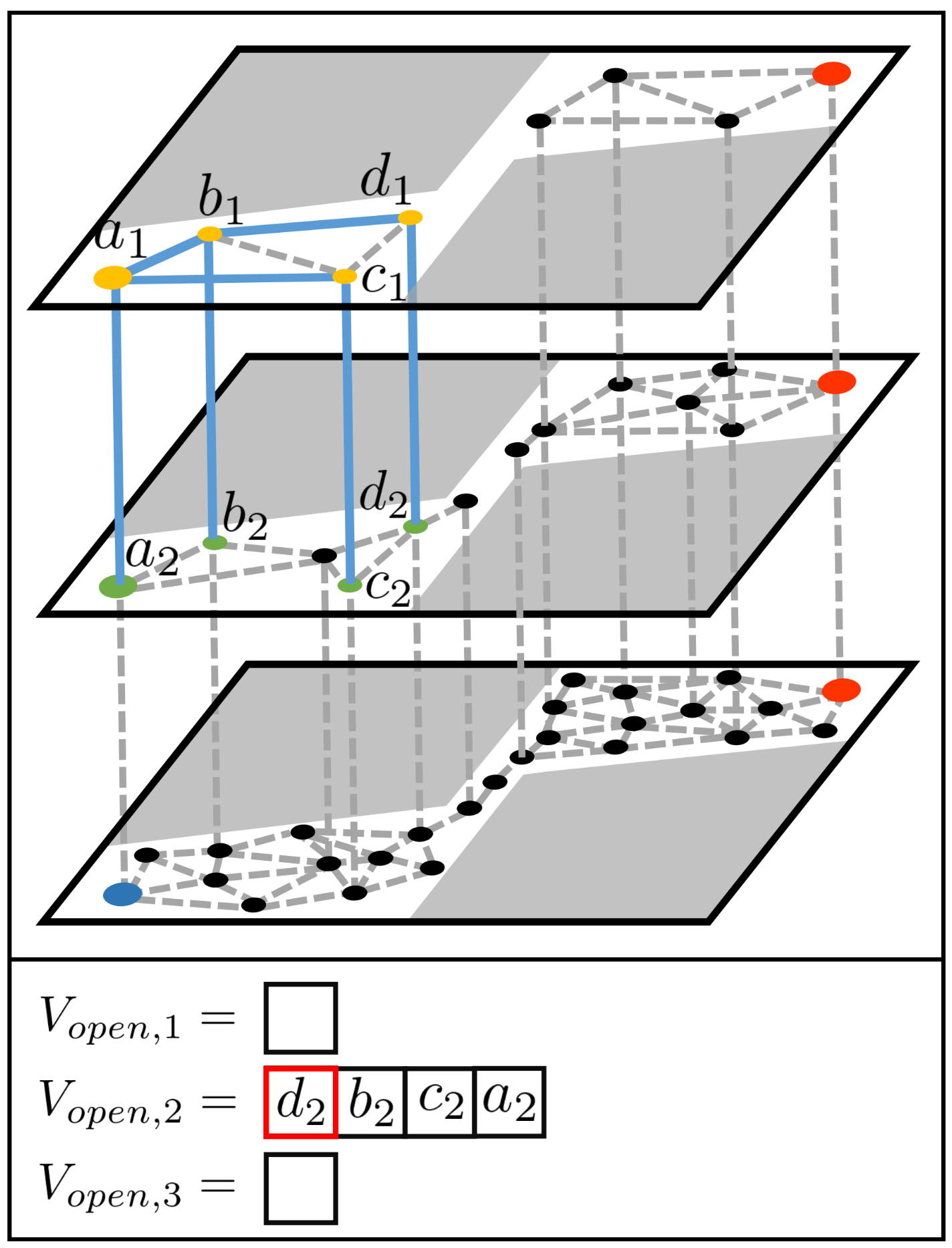}}\hfill
  \subfloat[$d_2$ was expanded and closed.]{\includegraphics[width = 0.24\textwidth]
  {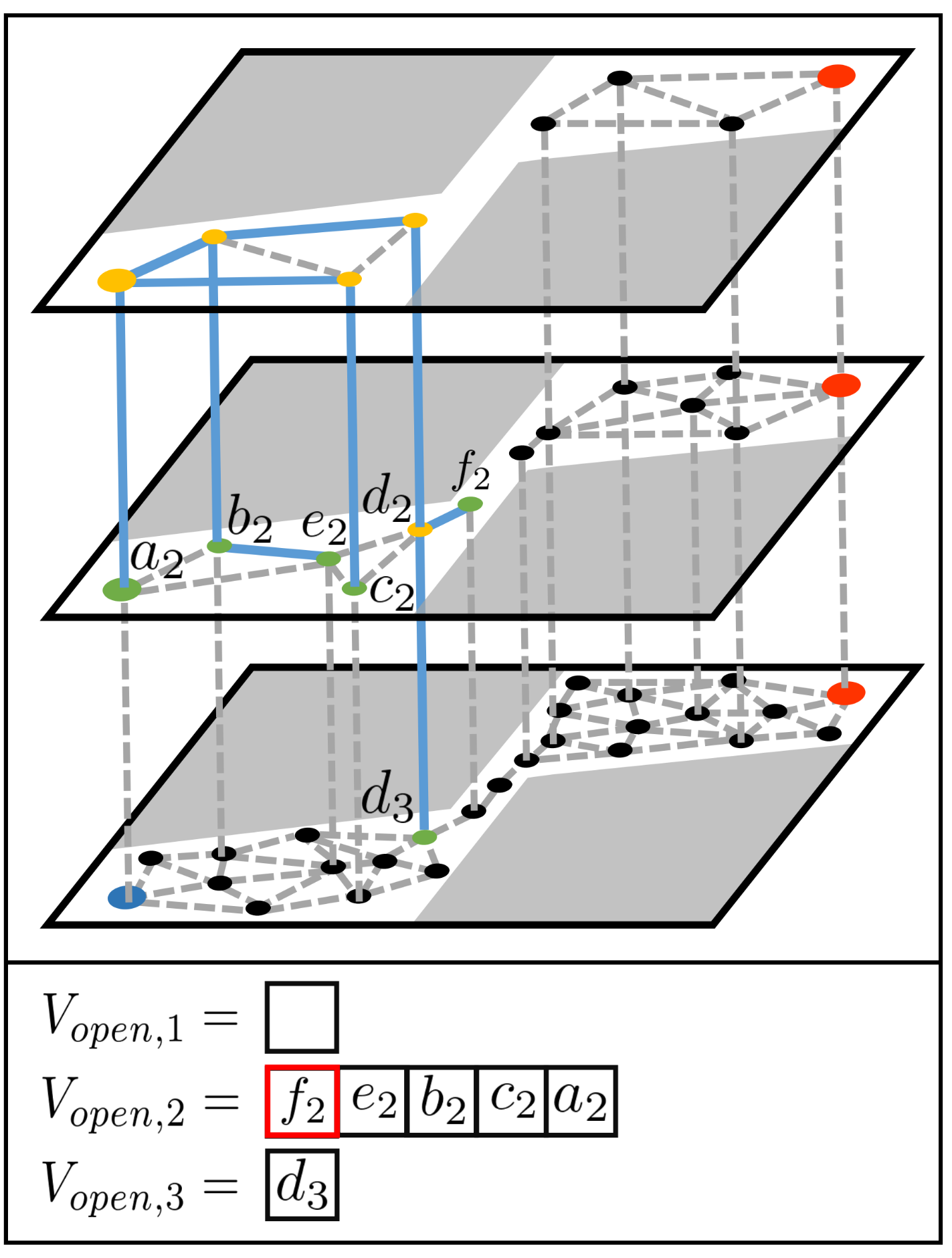}}\hfill
  \subfloat[$a_2$ to $f_2$ were iteratively expanded and closed. $V_{open,2}$ is empty, and therefore $p$ is increased to 3.]{\includegraphics[width = 0.24\textwidth]
  {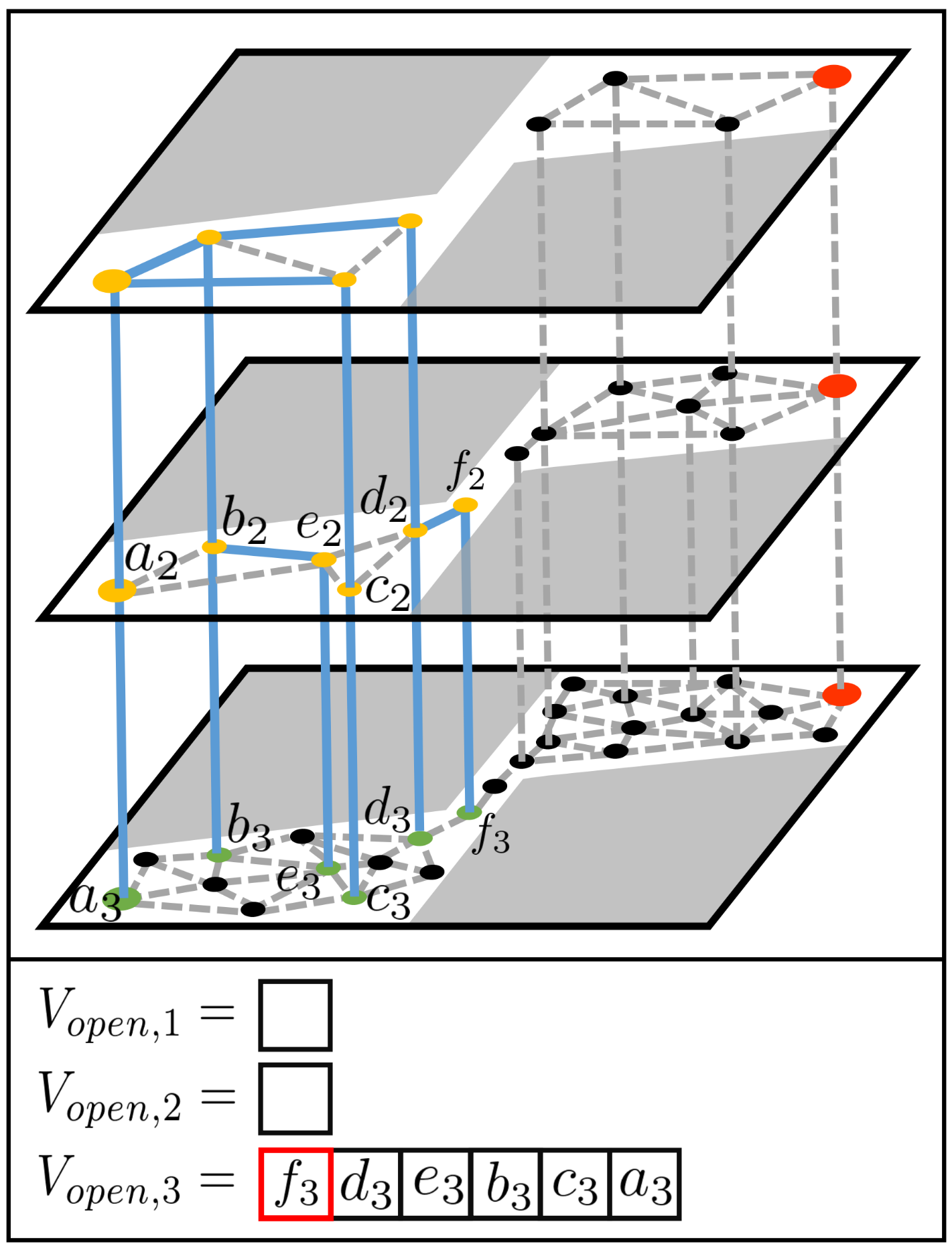}}
  
  \subfloat[$f_3$ to $h_3$ were iteratively expanded and closed. 
  As $h_3$ has a counterpart sample $h_2$ with a sparser resolution, $p$ decreases to 2.]{\includegraphics[width = 0.24\textwidth]
  {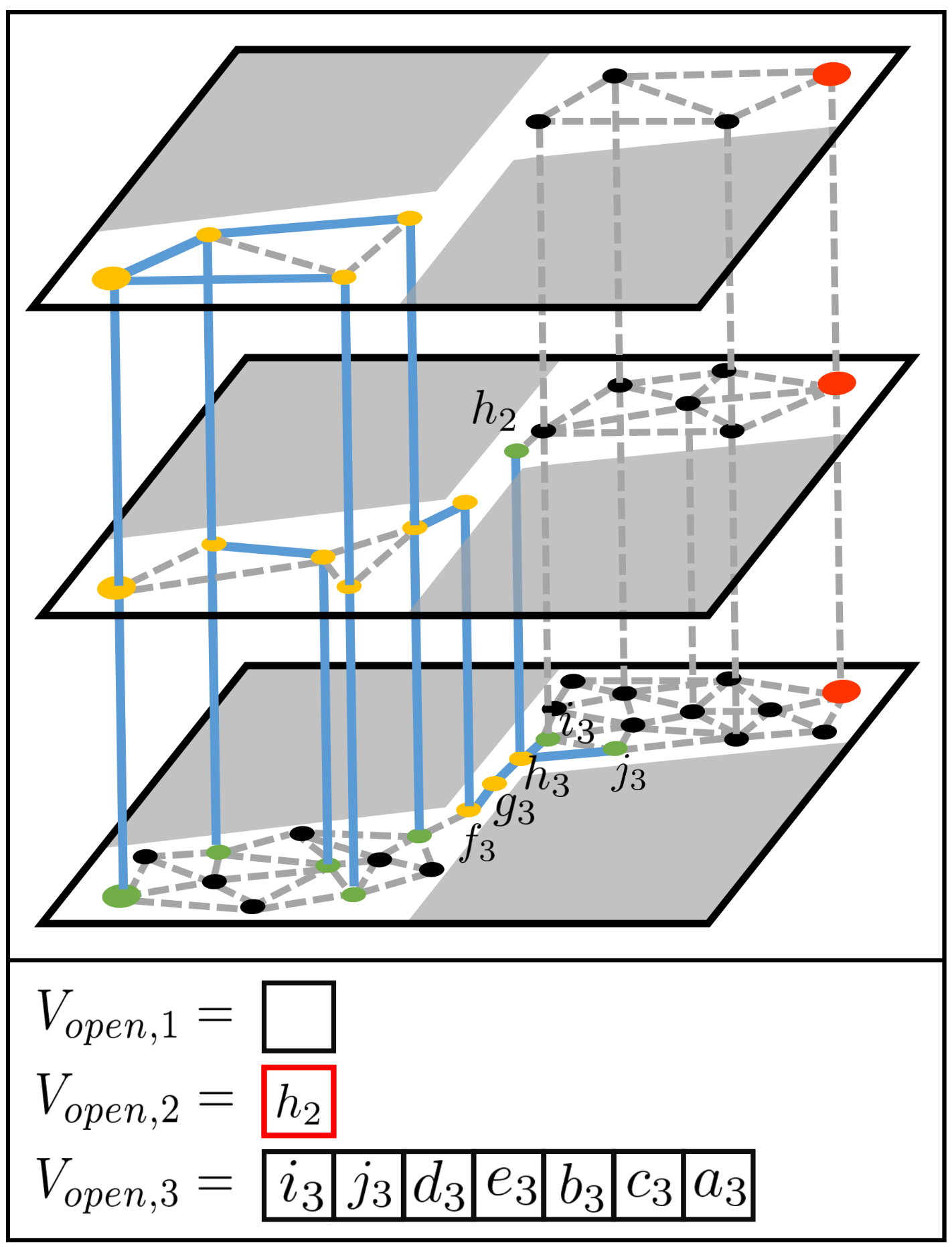}}\hfil
  \subfloat[$h_2$ and $i_2$ were expanded and closed. As $i_1$ was opened and connected to the tree, $p$ decreases to 1.]{\includegraphics[width = 0.24\textwidth]
  {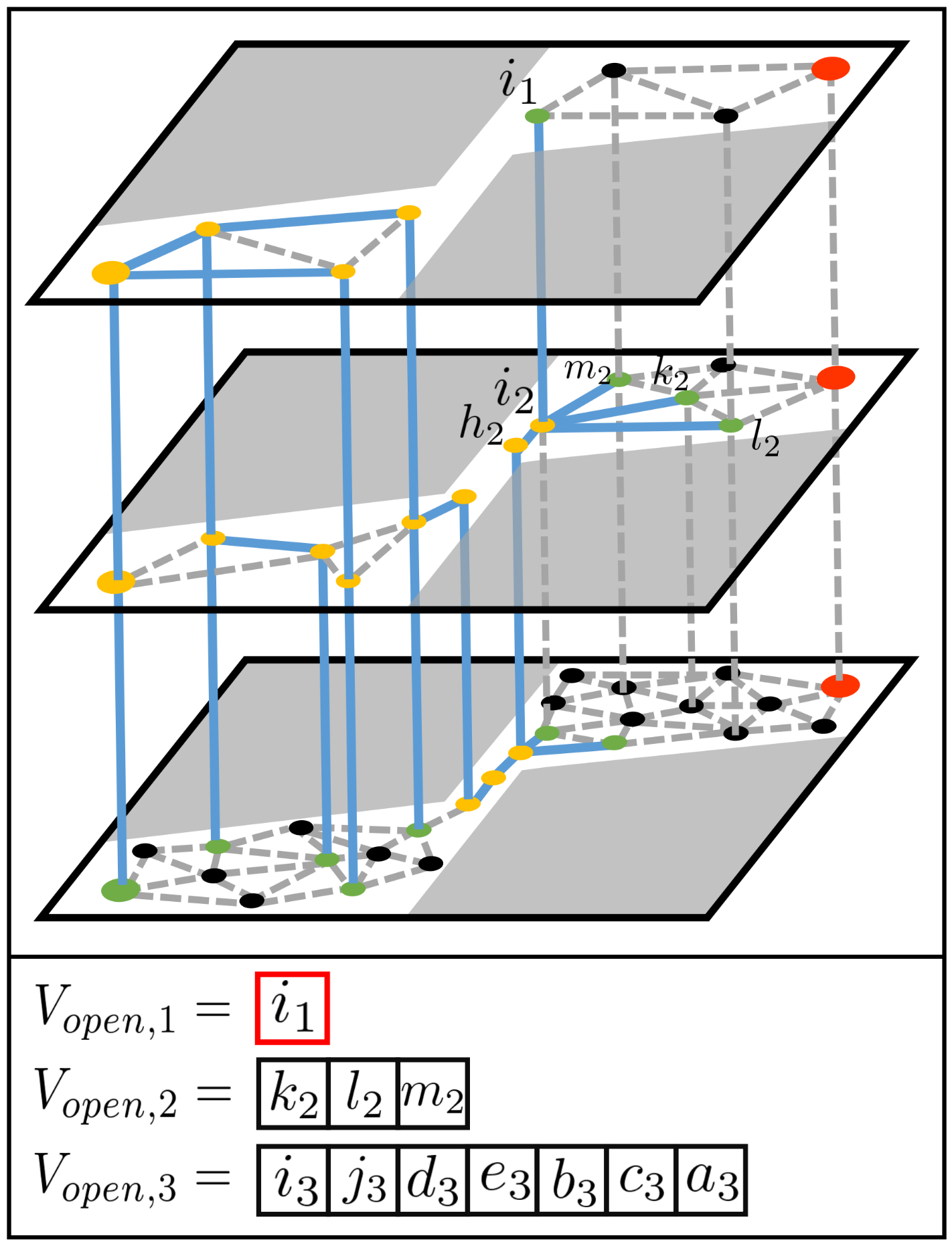}}\hfil
  \subfloat[$i_1$ and \text{\color{red}$l_1$} were expanded and closed. The goal sample is connected to the tree, and a solution path can be found.]{\includegraphics[width = 0.24\textwidth]
  {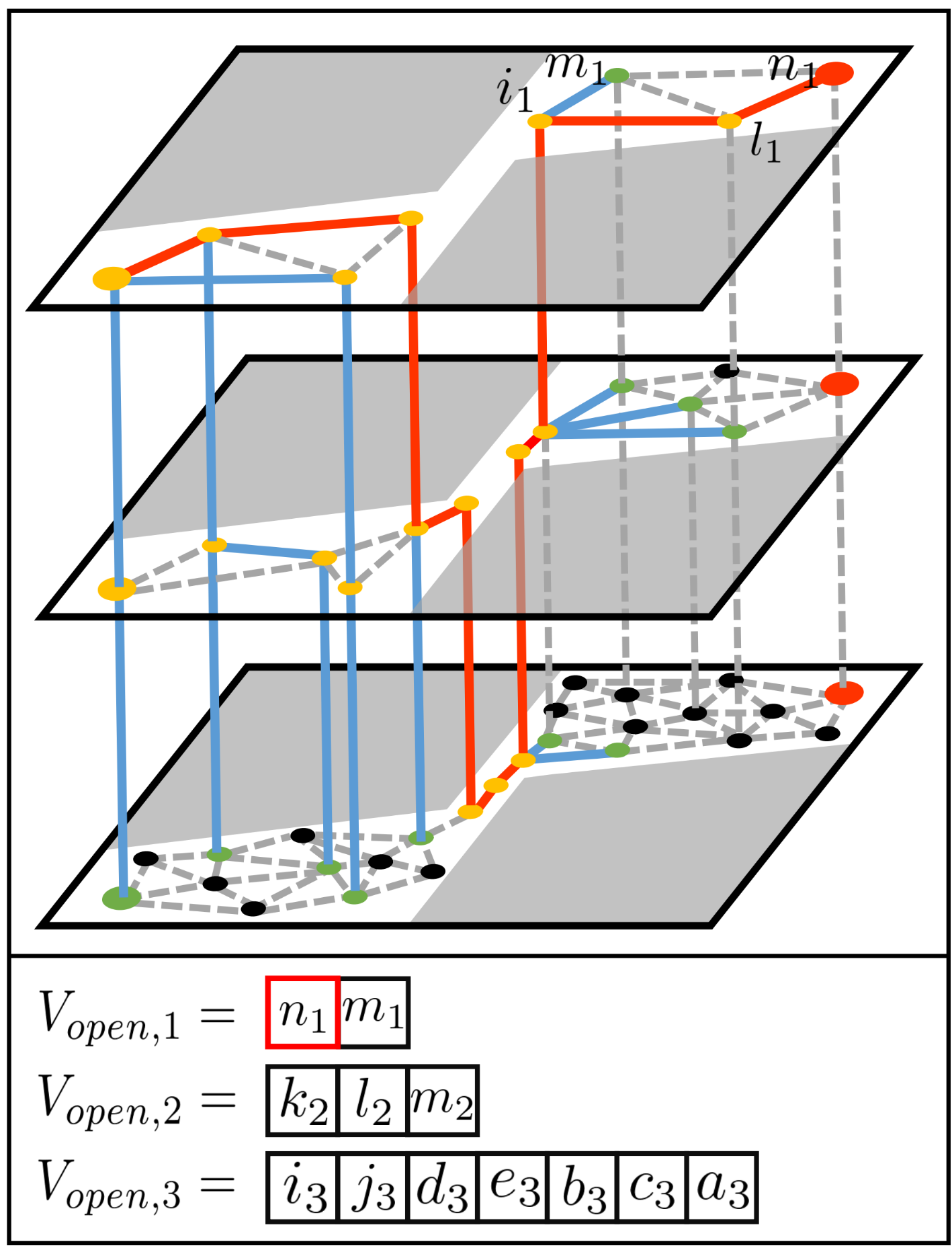}}

\caption{Visualization of the step-by-step planning process of MRFMT$^*$ for the motion planning problem in Fig.\ref{fig:2D_narrow_corridor}. 
The black, green, and yellow dots represent unvisited, open, and closed samples. 
The dashed grey lines depict the edges of the underlying search graph. 
Specifically, the vertical edges are the zero-cost edges connecting samples in adjacent layers representing the same configuration. 
The blue lines represent the edges of the search tree.
The sample to expand, $z$, is highlighted by red rectangles.
}
\label{fig:Steps}
\end{figure*}

The search process of MRFMT$^*$ for the narrow-corridor motion planning problem presented in Fig.\ref{fig:2D_narrow_corridor} is visualized in Fig.\ref{fig:Steps}. 
MRFMT$^*$ is initialized with three random sample sets. 
Each random sample set forms an underlying search graph, as shown in Fig.\ref{fig:2D_narrow_corridor}. 
Thanks to the defined zero-cost edges between samples representing the same configuration, we can visualize the underlying search graph of MRFMT$^*$ as a three-layered graph.
The vertical edges have samples with the same configuration at the ends and, therefore, can be lazily skipped for edge evaluation during the planning procedure.
Compared with the search result of FMT$^*$ as shown in Fig.\ref{fig:2D_narrow_corridor}, MRFMT$^*$ finds a solution with much fewer edge evaluations (MRFMT$^*$ evaluates 16 edges during the planning, while FMT$^*$ evaluates 24 edges.) and neighbor queries (Because MRFMT$^*$ searches neighboring samples mainly among sparse random sample sets, while FMT$^*$ constantly searches among the densest random sample set.), showing significant benefits in search cost reduction.

\subsection{Bidirectional Version - BMRFMT$^*$}
\label{BMRFMT_detail}
BMRFMT$^*$ is presented in Algorithm \ref{BMRFMT} as simple modifications of MRFMT$^*$, with changes highlighted in blue.
Instead of constructing only one search tree,
BMRFMT$^*$ constructs a forward search tree $\mathcal{T} = (V, E, V_{unvisited}, \{V_{open, l}\}_{l=1,\cdots,L})$ rooted at $x_{init}^1$ and a backward search tree $\mathcal{T}' = (V', E', V_{unvisited}', \{V_{open, l}'\}_{l=1,\cdots,L})$ rooted at $x_{goal}^1 \in \mathcal{X}_{goal}$ (line \ref{initialize}-\ref{BMRFMT:initialize_}, Algorithm \ref{BMRFMT}), which are expanded simultaneously during planning.
We use $z$ and $z'$ as the pointers to the wavefront node of $\mathcal{T}$ and $\mathcal{T}'$, respectively, and $p$ and $p'$ as the sparsity level of $z$ and $z'$, respectively.
$x_{meet}$ is the lowest-cost candidate sample for tree connection, which is updated in the \FuncSty{EXPAND} procedure (lines \ref{Expand:UpdateXmeetBegin}-\ref{Expand:UpdateXmeetEnd}, Algorithm \ref{Expand}). 
BMRFMT$^*$ checks whether the expansion is successful at each iteration in line \ref{BMRFMT:checkExpansionSuccessful}, Algorithm \ref{BMRFMT}.
If the expansion of the current tree fails, it swaps the search trees if the other tree has at least one sample in its search wavefront (line \ref{BMRFMT:SwapTreeIfCurrentFailsBegins}-\ref{BMRFMT:SwapTreeIfCurrentFailsEnds}, Algorithm \ref{BMRFMT}), or breaks if the other tree has no wavefront nodes either (line \ref{BMRFMT:break}, Algorithm \ref{BMRFMT}). 
In the opposite case, i.e., the expansion is successful, it swaps the search tree if the other tree has at least one sample in its search wavefront (line \ref{BMRFMT:SwapIfTheOtherTreeIsNotEmptyBegins}-\ref{BMRFMT:SwapIfTheOtherTreeIsNotEmptyEnds}, Algorithm \ref{BMRFMT}) or continues expanding the current tree until all samples are closed.
A solution path can be found by tracing along $\mathcal{T}$ and $\mathcal{T}'$ from $x_{meet}$, provided $x_{meet}$ is not a null pointer (line \ref{BMRFMT:track_solution_path_begins}-\ref{BMRFMT:track_solution_path_ends}, Algorithm \ref{BMRFMT}).

\section{Analysis}\label{Section:Analysis}

This section provides a thorough analysis of the critical properties of the MRFMT$^*$ algorithm, demonstrating its correctness in Theorem \ref{correctness} and asymptotic optimality in Theorem \ref{AO}. 

\begin{lemma}
\label{ExpandOnce}
MRFMT$^*$ expands a sample at most once throughout the entire planning procedure.
\end{lemma}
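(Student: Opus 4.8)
The plan is to track two structural invariants of MRFMT$^*$'s bookkeeping and then derive the claim by contradiction. Call a sample \emph{open} if it currently lies in some queue $V_{open,l}$ and \emph{closed} once it has been removed from all of them. I would prove: (I1) each sample is inserted into $\bigcup_{l=1}^{L}V_{open,l}$ at most once over the entire run; and (I2) whenever \textsc{Expand} is invoked on the wavefront node $z$ at sparsity level $p$, we have $z\in V_{open,p}$ on entry and $z$ is closed on exit.

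For (I1), I would first note that $V_{unvisited}$ only ever shrinks: after \textsc{Initialize}, the one line touching it is the deletion of $x$ inside \textsc{AddNode} (Algorithm~\ref{Expand}), and nothing is ever re-added. Next, any insertion of a sample $x\neq x_{init}^{1}$ into an open queue happens --- directly or through the auxiliary set $V_{new}$ --- inside the \textsc{AddNode} call made for $x$, which is triggered only for $x\in Z_{near}=\textsc{Neighbor}(z)\cap V_{unvisited}$ and which in the same step deletes $x$ from $V_{unvisited}$. Since $V_{unvisited}$ never regains $x$, \textsc{AddNode} fires for $x$ at most once, so $x$ becomes open at most once; the root $x_{init}^{1}$ is opened exactly once, at initialization.

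For (I2), I would induct over the main \textbf{while} loop of Algorithm~\ref{MRFMT}: $z\in V_{open,p}$ holds at the first iteration since $z=x_{init}^{1}\in V_{open,1}$ with $p=1$; and whenever \textsc{Expand} returns \texttt{True}, its terminating \textbf{while} loop has forced $V_{open,p}\neq\phi$ for the current (possibly shifted) value of $p$, so $z\leftarrow\arg\min_{x\in V_{open,p}}\textsc{Cost}(\mathcal{T},x)$ is well defined and again lands in $V_{open,p}$; if \textsc{Expand} returns \texttt{False} the loop breaks and nothing further is expanded. Within \textsc{Expand} the update $V_{open,p}\leftarrow V_{open,p}\setminus\{z\}$ runs unconditionally, and $z$ --- being open on entry, hence absent from $V_{unvisited}$ --- never lies in $Z_{near}$, so it is not re-opened during the same call; thus $z$ is closed on exit and also not in $V_{unvisited}$.

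Finally, suppose some sample $x$ were expanded at two iterations $t_1<t_2$. Expansion touches only the wavefront node, which by (I2) is drawn from an open queue, so $x$ is open at the start of iteration $t_2$. But (I2) says $x$ was closed, and not in $V_{unvisited}$, right after iteration $t_1$; hence it was re-inserted into some $V_{open,l}$ strictly between $t_1$ and $t_2$, contradicting (I1). So no sample is expanded twice. The delicate point --- where I would spend the most care --- is the exhaustive verification in (I1) that no branch of Algorithms~\ref{MRFMT} and~\ref{Expand}, including those reached when the sparsity pointer $p$ is decreased after a sparser-resolution update or increased by the terminating \textbf{while} loop, ever restores a sample to $V_{unvisited}$ or opens a sample not freshly taken from $V_{unvisited}$; the rest is routine.
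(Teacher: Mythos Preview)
Your proposal is correct and follows essentially the same approach as the paper: both hinge on the observation that a sample can enter the open queues only by first appearing in $Z_{near}\subseteq V_{unvisited}$ and being processed by \textsc{AddNode}, which simultaneously deletes it from $V_{unvisited}$, so it can never be opened---and hence never selected as $z$---a second time. Your version is simply a more careful formalization, making explicit the invariants (I1) and (I2) and the edge case that $z$ itself cannot re-enter $Z_{near}$ during its own expansion, points the paper's short proof leaves implicit.
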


\begin{proof}
By construction, any sample to expand except for $x_{init}$ must be in the unvisited neighboring sample set of the lowest-cost open sample $z$, $Z_{near}$ (line \ref{unvisited}, Algorithm \ref{Expand}). 
Upon expansion, a successfully expanded sample is removed from $V_{unvisited}$ (line \ref{remove_z_from_open}, Algorithm \ref{Expand}), and therefore will never be included in $Z_{near}$ in future expansions. 
Consequently, each sample can be expanded at most once before MRFMT$^*$ finds a solution.
\end{proof}

\begin{theorem}[Probability Completeness of MRFMT$^*$]
\label{correctness}
\textcolor{black}{Let $(\mathcal{X}_{free}, x_{init}, \mathcal{X}_{goal})$ be a robustly feasible path planning problem. Let $n_L$ denote the number of samples in the finest resolution layer of MRFMT$^*$. 
Then, as $n_L \to \infty$, the probability that MRFMT$^*$ finds a feasible path converges to one:}
\begin{equation}
\lim_{n_L \to \infty} \mathbb{P} \left( V\cap \mathcal{X}_{goal}\right) = 1,
\end{equation}
where $V$ is the set of samples connected to the search tree $\mathcal{T}$.
\end{theorem}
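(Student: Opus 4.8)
The plan is to reduce the probabilistic completeness of MRFMT$^*$ to the known probabilistic completeness of FMT$^*$ restricted to the finest resolution layer. The key observation is that the finest sample set $X_L$ contains $n_L$ uniform i.i.d.\ samples plus $x_{init}$ and $x_{goal}$, with connection radius $r_L$ (or neighbor count $k_L$) chosen exactly as in FMT$^*$/RRG. So it suffices to argue that whatever FMT$^*$ would accomplish on $X_L$ alone, MRFMT$^*$ also accomplishes, i.e.\ that the multi-resolution bookkeeping and the cross-layer zero-cost edges never cause MRFMT$^*$ to \emph{fail to} connect a sample that FMT$^*$ on layer $L$ would connect.

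First I would set up the coupling: run MRFMT$^*$ and an imagined FMT$^*$ on the common sample pool $X_L$ using the same randomness. I would state and use the robust feasibility hypothesis to invoke the standard FMT$^*$ result (Janson--Schmerli--Pavone): with the RRG-type scaling of $r_L$ (resp.\ $k_L$), as $n_L\to\infty$ the probability that FMT$^*$ on $X_L$ returns a feasible path tends to one; equivalently, there is (w.h.p.) a sequence of samples from $x_{init}$ to $\mathcal{X}_{goal}$ in $X_L$ whose consecutive gaps are below $r_L$ and whose connecting segments are collision-free, and FMT$^*$'s dynamic-programming wavefront reaches $\mathcal{X}_{goal}$ along such a sequence. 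Second, I would show MRFMT$^*$'s wavefront dominates this: MRFMT$^*$ only terminates (returns \textsc{False} in \textsc{Expand}, line \ref{terminate}) when \emph{every} open queue $V_{open,l}$ is empty for all $l$, in particular $V_{open,L}=\phi$; combined with Lemma~\ref{ExpandOnce} (each sample expanded at most once) and the fact that the pointer $p$ is driven to $L$ whenever sparser queues empty (line \ref{Increase_Pointer}) and dropped only when a genuinely sparser sample gets opened, I would argue MRFMT$^*$ performs, among other things, the full FMT$^*$ recursion on layer $L$ — every locally optimal layer-$L$ connection that FMT$^*$ evaluates is also evaluated by MRFMT$^*$ (possibly interleaved with coarser-layer work and cross-layer hops, which only \emph{add} reachable samples, never remove them). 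Hence the set $V$ of tree-connected samples contains the FMT$^*$-on-$X_L$ tree as a subset restricted to layer $L$, so if FMT$^*$ reaches $\mathcal{X}_{goal}$ then so does MRFMT$^*$.

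The main obstacle is making precise the claim that ``MRFMT$^*$ restricted to layer $L$ simulates FMT$^*$ on $X_L$.'' FMT$^*$ expands in global cost-to-come order; MRFMT$^*$ expands in cost-to-come order \emph{only within the current layer} and may visit a layer-$L$ sample with a suboptimal cost-to-come because it arrived via a cross-layer shortcut before the optimal layer-$L$ predecessor was closed. One must check this cannot prevent eventual connection: since zero-cost cross-layer edges never increase cost-to-come, any path realizable in FMT$^*$-on-$X_L$ remains realizable (with cost no larger) in the combined graph, and the key structural fact is simply \emph{reachability} of $\mathcal{X}_{goal}$ in the tree, not matching FMT$^*$'s exact connection structure. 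I would therefore phrase the argument purely in terms of graph connectivity: define, on the event $A_{n_L}$ that the FMT$^*$ feasibility conditions hold on $X_L$ (which has $\mathbb{P}(A_{n_L})\to1$), the existence of a ``good path sequence'' in layer $L$; then show by induction along that sequence that MRFMT$^*$ never halts before adding each of its samples to $V$ — because as long as some sample on the sequence is still unvisited and has a closed neighbor, $V_{open,L}\neq\phi$, so the \textsc{while}-loop does not break. The remaining care is the collision-check discipline: MRFMT$^*$, like FMT$^*$, may skip a sample whose single locally-optimal candidate edge is in collision; the standard FMT$^*$ analysis handles exactly this by choosing $r_L$ large enough that the good path sequence can be made ``$r_L/2$-robust,'' guaranteeing the skipped-sample phenomenon does not break connectivity. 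I would cite that argument rather than reproduce it.

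\begin{equation}
\mathbb{P}(V\cap\mathcal{X}_{goal}\neq\emptyset)\;\ge\;\mathbb{P}(A_{n_L})\;\xrightarrow[n_L\to\infty]{}\;1,
\end{equation}
which gives the claim. In summary: (i) invoke robust feasibility to get a robust feasible path; (ii) invoke FMT$^*$'s completeness on $X_L$ with RRG scaling to get the high-probability event $A_{n_L}$ and an $r_L/2$-robust good sample sequence; (iii) use Lemma~\ref{ExpandOnce} and the termination logic of \textsc{Expand} to show MRFMT$^*$ cannot stop while any sample of that sequence remains reachable-but-unvisited; (iv) conclude $\mathcal{X}_{goal}$ is reached on $A_{n_L}$, and let $n_L\to\infty$.
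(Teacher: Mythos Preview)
Your proposal is correct and follows essentially the same route as the paper: use Lemma~\ref{ExpandOnce} and the termination logic of \textsc{Expand} to argue that MRFMT$^*$ cannot halt before exhausting the layer-$L$ wavefront, so that in the worst case it effectively carries out an FMT$^*$ search over $X_L$, and then invoke the known probabilistic completeness of FMT$^*$ on $n_L$ samples to bound $\mathbb{P}(V\cap\mathcal{X}_{goal}\neq\emptyset)$ from below. You are more careful than the paper in flagging the interleaving subtlety (that MRFMT$^*$ restricted to layer $L$ is not literally FMT$^*$ because cross-layer arrivals perturb the open-set order) and in resolving it by arguing reachability along a robust good-path sequence rather than tree equivalence, but the underlying reduction is identical.
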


\begin{proof}
From Lemma \ref{ExpandOnce}, we can infer that MRFMT$^*$ either terminates with at least one expanded sample in $\mathcal{X}_{goal}$ and returns a solution path, or exits the search iteration without a solution after exhausting all $V_{open, l}$ for $l=1,\cdots, L$
In the worst case, where all random sample sets with lower resolutions do not contain a solution,
MRFMT$^*$ performs an FMT$^*$ search over the finest resolution.
Hence, the probability that MRFMT$^*$ returns a feasible solution is lower bounded by the probability that FMT$^*$ finds a feasible solution by searching over the finest resolution sample set.
According to \cite{FMT}, when the number of random samples is sufficiently large, the probability that FMT$^*$ finds a feasible solution approaches 1. Therefore, as the number of samples at the finest resolution $n_L\rightarrow\infty$, the probability that MRFMT$^*$ finds a feasible solution also approaches 1, making it probabilistically complete with respect to the finest resolution samples.
\end{proof}

\textcolor{black}{Therefore, to ensure a high planning success rate, a sufficiently large random sample set should be initialized, albeit at the expense of increased search cost.}

\begin{theorem}[Asymptotical Optimality of MRFMT$^*$]
\label{AO}
Let $(\mathcal{X}_{free}, x_{init}, \mathcal{X}_{goal}, c)$ be a $\xi$-robustly feasible path planning problem in d dimensions, where $\xi>0$, $\mathcal{X}_{goal}$ is $\xi$-regular, and the cost function $c$ admits a robustly optimal solution with finite cost. 
\textcolor{black}{Then, as $n_1 \to \infty$, the probability that MRFMT$^*$ returns a path with cost arbitrarily close to $c^*$ converges to one:
\begin{equation}
\lim_{n_1 \to \infty} \mathbb{P} \left( c(\pi_{n_1}) \to c^* \right) = 1,
\end{equation}
where $\pi_{n_1}$ denotes the solution returned by MRFMT$^*$ using $n_1$ samples.}
\end{theorem}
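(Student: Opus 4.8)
The plan is to reduce the asymptotic optimality of MRFMT$^*$ to that of FMT$^*$ restricted to the coarsest sample set $X_1$. The key structural observation is that MRFMT$^*$ always begins its search at sparsity level $p=1$ and, by the pointer-management logic in Algorithm~\ref{Expand} (lines~\ref{Decrease_Pointer} and~\ref{Increase_Pointer}), it only advances to a denser level when $V_{open,1}$ becomes empty, and it immediately returns to level $1$ whenever a sample at level $1$ is opened. Consequently, the subgraph explored within $X_1$ is expanded in exactly the same order and by exactly the same forward dynamic programming recursion as a standalone FMT$^*$ run on $X_1$: the zero-cost cross-layer edges and the denser layers can only supply \emph{additional} ways to connect level-$1$ samples, never remove any connection that FMT$^*$ on $X_1$ alone would have made. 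So the tree $\mathcal{T}$ restricted to $X_1$ contains, as a sub-tree, the FMT$^*$ tree on $X_1$ (possibly with some level-$1$ samples reached via detours through finer layers, which can only decrease their cost-to-come).

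First I would state precisely the inclusion claim: for every sample $x_i^1 \in X_1$ that FMT$^*$-on-$X_1$ connects, MRFMT$^*$ also connects $x_i^1$, and $\mathrm{Cost}_{\mathrm{MRFMT}^*}(\mathcal{T}, x_i^1) \le \mathrm{Cost}_{\mathrm{FMT}^*\text{-on-}X_1}(x_i^1)$. This follows by induction on expansion order, using Lemma~\ref{ExpandOnce} (each sample expanded at most once) together with the fact that the $x_{min}$-selection in line~\ref{find_x_min} of Algorithm~\ref{Expand} minimizes over a superset of the candidates available to FMT$^*$ (it may include the same-resolution neighbors FMT$^*$ would use, plus counterpart samples from adjacent layers). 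Second, I would invoke the asymptotic optimality theorem for FMT$^*$ from \cite{FMT}: under the stated hypotheses ($\xi$-robust feasibility, $\xi$-regular $\mathcal{X}_{goal}$, robustly optimal solution of finite cost), with $n_1$ samples drawn i.i.d.\ uniformly and connection radius $r_{n_1}$ chosen per the RRG/kPRM scaling, FMT$^*$ on $X_1$ returns a solution of cost converging in probability to $c^*$ as $n_1 \to \infty$. Third, combining the inclusion with the FMT$^*$ guarantee: the path MRFMT$^*$ returns (traced from the goal sample $z \in \mathcal{X}_{goal}$, which is present in $X_1$ by the construction in Section~\ref{RandomSampleSetDef}) has cost no larger than the FMT$^*$-on-$X_1$ solution cost, hence also converges to $c^*$.

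The main obstacle I anticipate is making the inclusion argument fully rigorous in the presence of the cross-layer detours. One must verify that routing a level-$1$ sample's cost-to-come through a finer layer and back cannot create a cost \emph{lower} than $c^*$ in a way that breaks the comparison, and — more subtly — that the expansion order at level $1$ is not perturbed in a way that causes MRFMT$^*$ to skip a level-$1$ sample FMT$^*$ would have connected. Because the cross-layer edges have zero cost and connect identical configurations, a detour to layer $l>1$ and back to layer $1$ can only reproduce or improve a cost-to-come, never corrupt it; and since MRFMT$^*$ exhausts $V_{open,1}$ before ever incrementing $p$, every level-$1$ sample that is reachable in the level-$1$ search graph (or via detours) is eventually expanded before termination. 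A secondary technical point is confirming that the termination criterion \FuncSty{PTC} and the "continue until all samples closed" behavior do not cause MRFMT$^*$ to halt prematurely before the level-$1$ FMT$^*$ wavefront has swept far enough; this is handled by noting that a solution through $\mathcal{X}_{goal} \cap X_1$ is found no later than the corresponding FMT$^*$ run would find it. I would also remark that the theorem as stated fixes the finer layers' sizes $n_2,\dots,n_L$ and sends only $n_1 \to \infty$, which forces $n_1 = n_L = N$ in the limit (by the constraint $n_1 < \cdots < n_L$); I would either reconcile this with the intended reading (all resolutions grow, with $n_1$ the bottleneck) or note that the argument only needs $n_1 \to \infty$ with the coarsest graph satisfying the RRG scaling, independent of what the finer layers do.
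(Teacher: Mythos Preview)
Your proposal is correct but takes a more elaborate route than the paper. The paper's proof is a one-line reduction: since MRFMT$^*$ starts at $p=1$ and only increments $p$ when $V_{open,1}$ is exhausted, on the event that FMT$^*$ on $X_1$ reaches $\mathcal{X}_{goal}$ (which, by Theorem~4.1 of \cite{FMT}, happens with probability tending to $1$ as $n_1\to\infty$), MRFMT$^*$ \emph{is} FMT$^*$ on $X_1$ and never touches a denser layer before terminating. Hence $c^{\mathrm{MRFMT}^*}=c^{\mathrm{FMT}^*|X_1}$ on that event, and asymptotic optimality follows directly. There is no inclusion or domination argument at all; the cross-layer machinery is simply declared irrelevant on the high-probability event.

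Your approach instead establishes the pathwise inequality $c^{\mathrm{MRFMT}^*}\le c^{\mathrm{FMT}^*|X_1}$ via an inductive comparison of the two expansion processes. This is sound, and arguably more robust: it would continue to work for variants that interleave layers more aggressively or that do not terminate at the first goal hit. One small correction to your sketch: while $p=1$, the candidate set $X_{near}=\mathrm{Neighbor}(x)\cap V_{open,1}$ for a level-$1$ sample $x$ is \emph{exactly} the FMT$^*$ candidate set, not a proper superset, because $V_{open,1}$ contains only level-$1$ samples (the counterpart $x^2$ lives in $V_{open,2}$). This actually simplifies your induction to an equality during the entire $p=1$ phase, which is all you need. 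Your closing remark about the constraint $n_1<\cdots<n_L$ forcing all layers to coalesce as $n_1\to\infty$ is a valid observation about the theorem statement; the paper does not address it, and your suggested reading (all resolutions grow with $n_1$ the bottleneck) is the natural one.
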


\begin{proof}
Let $c^*$ denote the cost of the optimal solution, and $c^{ALG}$ denote the cost of the path returned by a motion planning algorithm \FuncSty{ALG}.
To solve the motion planning problem, MRFMT$^*$ applies an FMT$^*$ search over the coarsest resolution sample set until the coarsest resolution sample set has no samples to expand. 
From Theorem 4.1 in \cite{FMT}, the cost of the path returned by FMT$^*$ with $n$ random samples using the radius in Equation 3 of \cite{FMT}, denoted as $c^{FMT^*}$, satisfies $\mathbb{P}(c^{FMT^*}>(1+\varepsilon)c^*)=0$ as $n\rightarrow\infty$ for all $\varepsilon>0$.
Thus, the probability that MRFMT$^*$ biases towards denser resolution sample sets approaches 0 as $n_1$ approaches $\infty$ when MRFMT$^*$ is equivalent to FMT$^*$.
Therefore, 
$$
\mathbb{P}(c^{MRFMT^*}>(1+\varepsilon)c^*)=0
$$
as $n_1\rightarrow\infty$ for all $\varepsilon>0$, proving that MRFMT$^*$ is asymptotically optimal.
\end{proof}

\textcolor{black}{
Therefore, when the sparser random sample set is sufficiently large, the solution cost improves, albeit at the expense of higher search cost.
}

\begin{figure*}
\centering
  \subfloat[$\mathbb{SE}(2)$ bug trap]{\includegraphics[width = 0.27\textwidth]{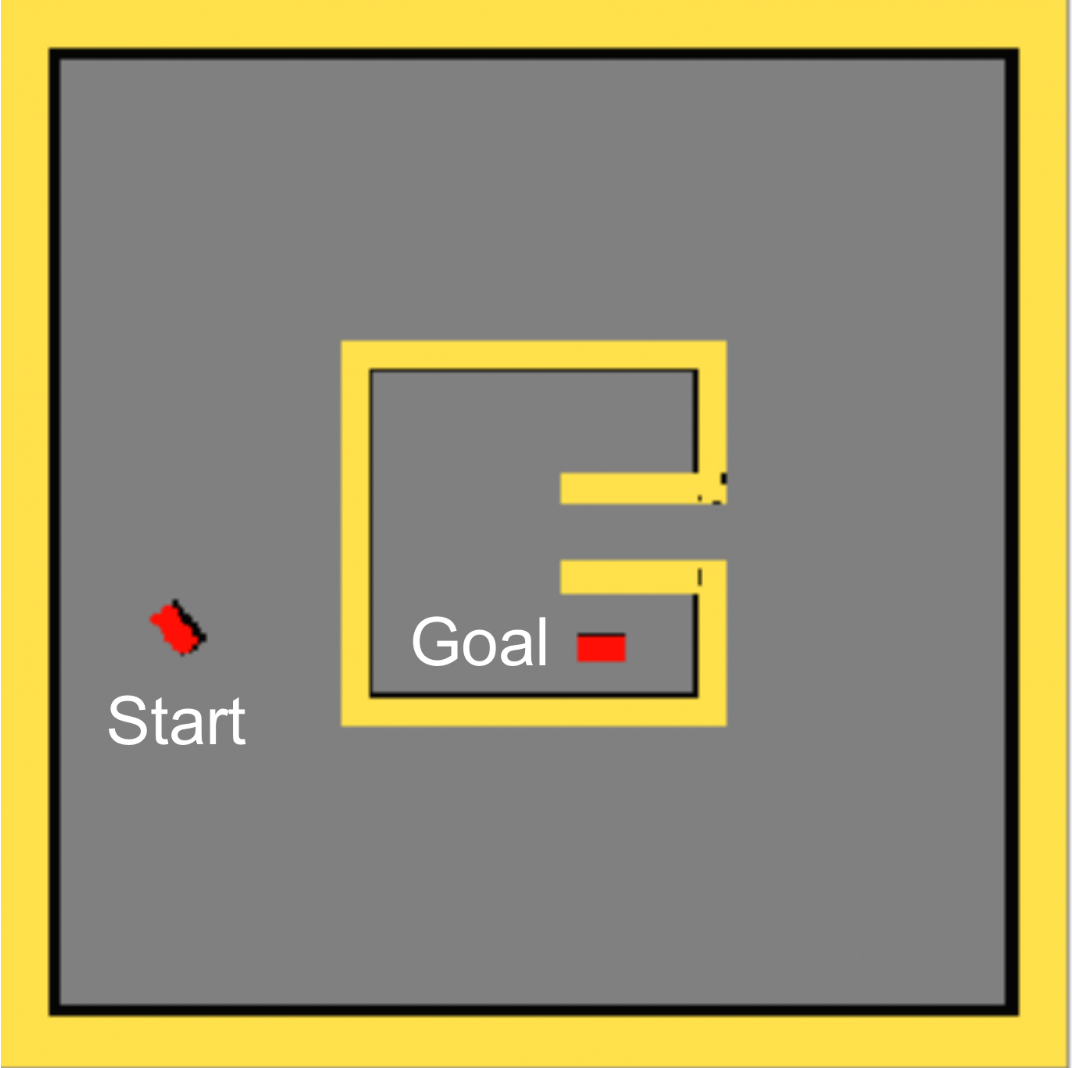}}\hfil
  \subfloat[$\mathbb{SE}(3)$ apartment]{\includegraphics[width = 0.29\textwidth]{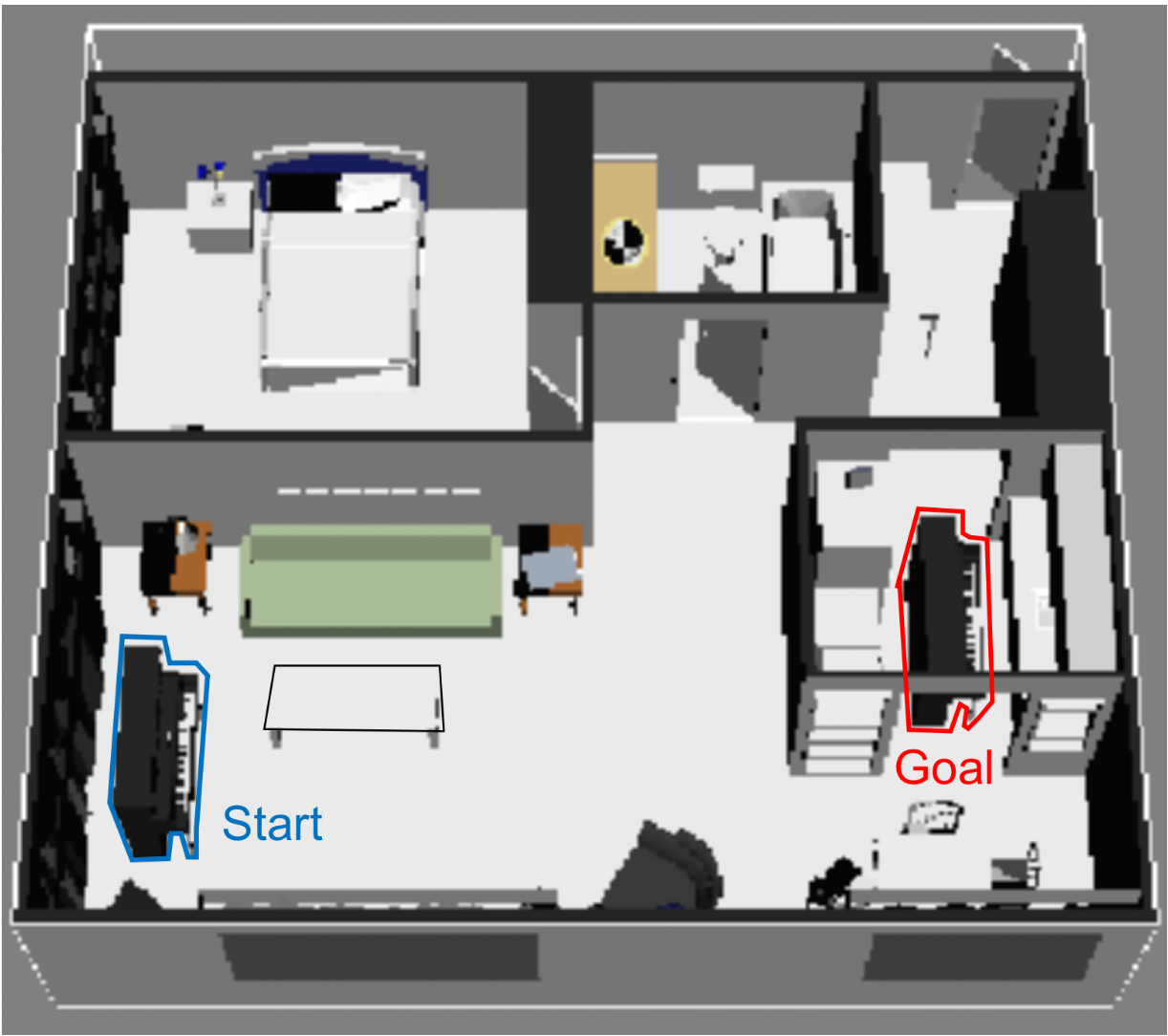}}\hfil
  \subfloat[14-DoF Link Robot]{\includegraphics[width = 0.26\textwidth]{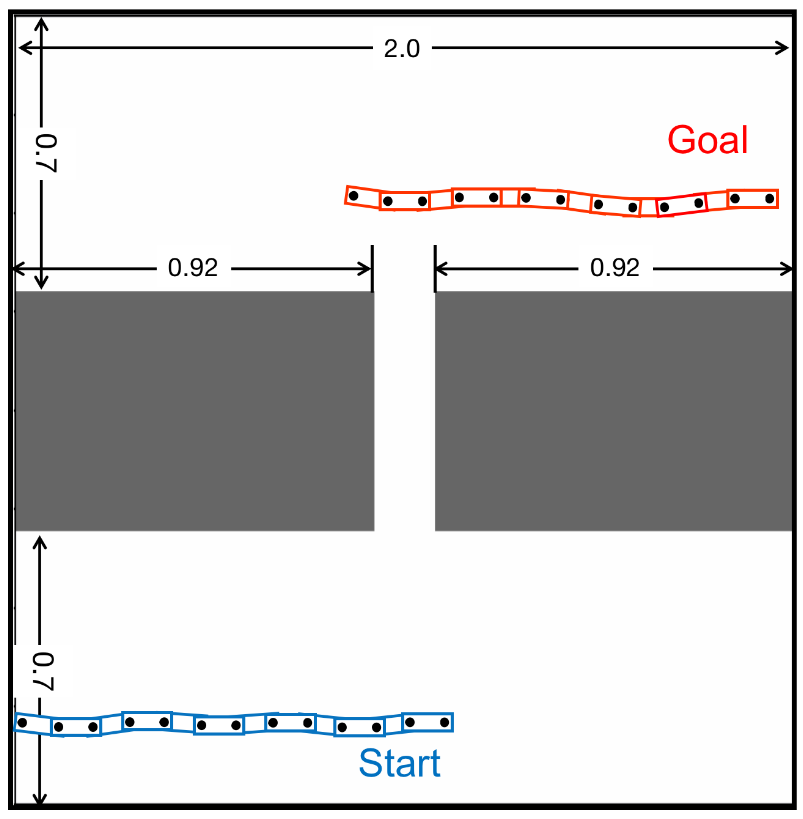}}
\caption{Descriptions of the $\mathbb{SE}(2)$, $\mathbb{SE}(3)$, and $\mathbb{R}^{14}$ rigid-body motion planning problems.}
\label{fig:SimulationScene}
\end{figure*}

\section{Simulations}
\label{Section:Simulation}

\subsection{Simulation Setup}
In this section, we compare MRFMT$^*$ and BMRFMT$^*$ with several sampling-based motion planning algorithms, specifically PRM$^*$\cite{PRM}, RRT$^*$\cite{RRTstar}, BIT$^*$\cite{BIT}, SPARS2\cite{SPARS2}, FMT$^*$\cite{FMT}, and BFMT$^*$\cite{BFMT}, to numerically investigate the advantages of MRFMT$^*$ and BMRFMT$^*$. 
To ensure a fair comparison, each planning algorithm was tested using the Open Motion Planning Library (OMPL) v1.6.0~\cite{OMPL}.
We considered three motion planning problems from the OMPL's test suite:
\begin{itemize}
    \item the bug trap problem in $\mathbb{SE}(2)$ as shown in Fig.\ref{fig:SimulationScene}(a);
    \item the piano movers' problem in $\mathbb{SE}(3)$ as shown in Fig.\ref{fig:SimulationScene}\text{\color{red}(b)}.
\end{itemize}
Besides, we also consider the following problem to investigate the planners' performance in very high-dimensional configuration spaces:
\begin{itemize}
    \item the movable link robot problem in $\mathbb{R}^{14}$ as shown in Fig.\ref{fig:SimulationScene}\text{\color{red}(c)}.
\end{itemize}
The link robot has 12 links and can move freely in the x-y plane. 
In each case, dynamic constraints were neglected, and arc length was used as the solution cost for all problems.

For the benchmarking algorithms,  we used the default OMPL settings,
except that we used heuristics for all algorithms and did not extend the graph of FMT$^*$ and BFMT$^*$.
We ensured that MRFMT$^*$ and BMRFMT$^*$ used the same tuning parameters and configurations as the benchmarking algorithms whenever possible.
To compare the quality between incremental or "anytime" planners (i.e., RRT$^*$, SPARS2, PRM$^*$, and BIT$^*$) and non-incremental planners (i.e., FMT$^*$, BFMT$^*$, MRFMT$^*$, and BMRFMT$^*$, which generate solutions via sample batches), we varied the number of free configuration samples $N$ taken during initialization for the non-incremental planners. 
This variation serves as a proxy for execution time. 
Specifically, $N$ ranged from 200 to 10,000 for the $\mathbb{SE}(2)$ problems, from 1,000 to 30,000 for the piano movers' problem, and from 4,000 to 40,000 for the movable link robot problem.
For MRFMT$^*$ and BMRFMT$^*$, we used a linearly increasing sequence to allocate the number of samples for each random sample set. 
Specifically, the number of nodes in the $l$th random sample set is given by $n_l = \lfloor lN/L \rfloor$. 
We used $L=4$ for the $\mathbb{SE}(2)$ problems and $L=6$ for the $\mathbb{SE}(3)$ and the $\mathbb{R}^{14}$ problems as empirically the chosen parameters fit comfortably in memory and was able to solve our scenarios.
The maximum memory was limited to 4096 MB for all planners.

\subsection{Simulation Results and Discussions}
\begin{figure*}[h]
\centering
  \subfloat{\includegraphics[width = 0.33\textwidth]{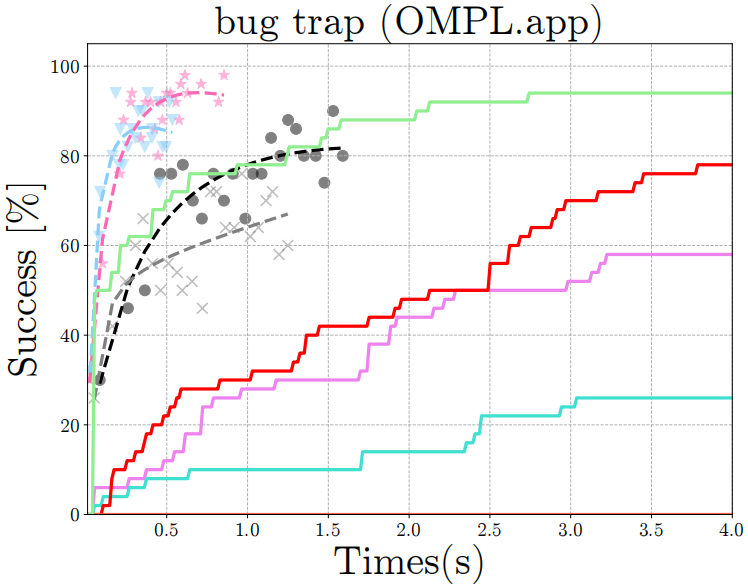}}\hfil
  \subfloat{\includegraphics[width = 0.33\textwidth]{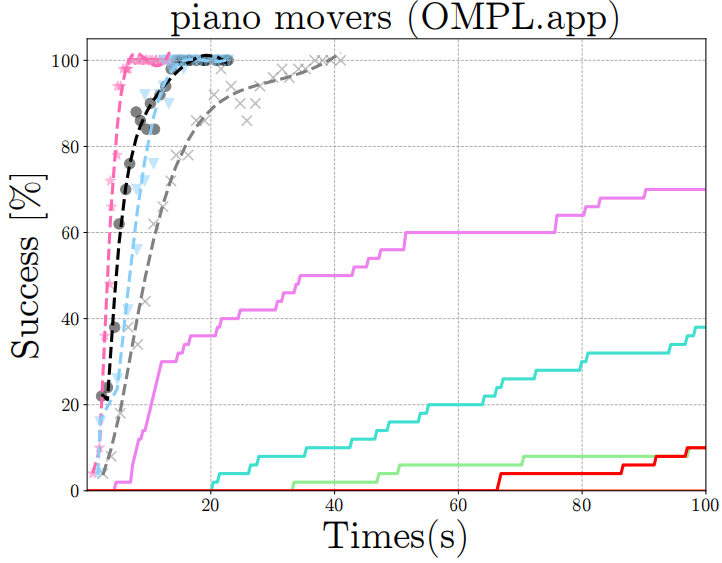}}
  \subfloat{\includegraphics[width = 0.33\textwidth]{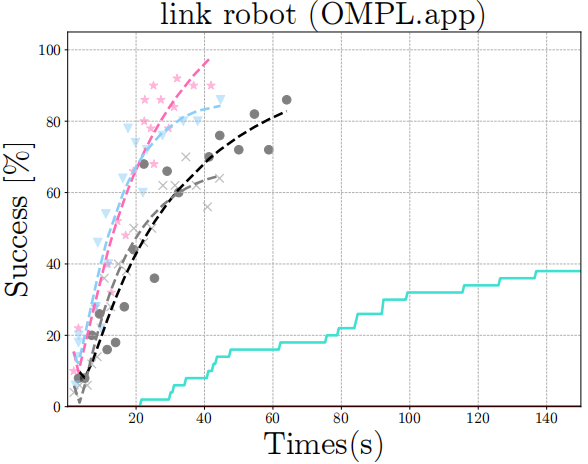}}
  
  \subfloat{\includegraphics[width = 0.33\textwidth]{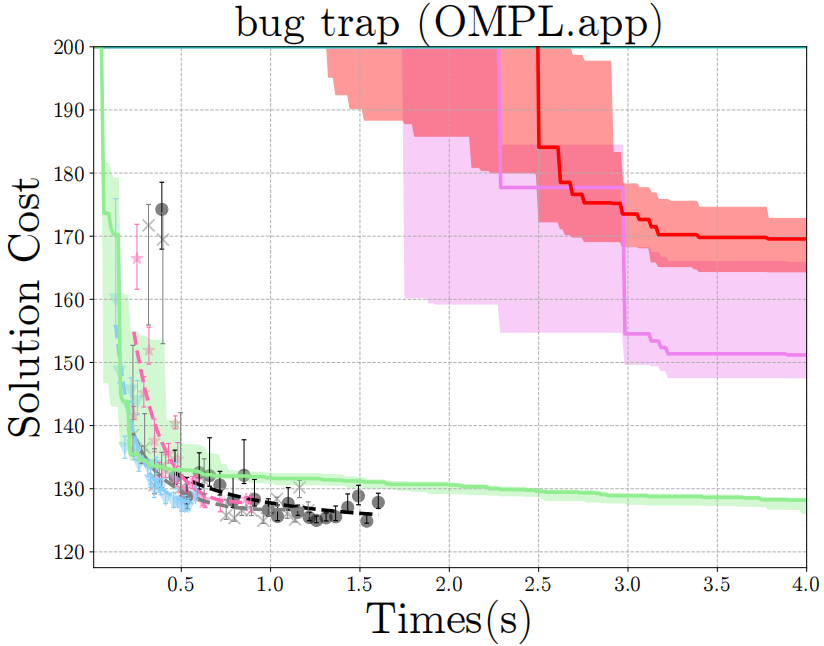}}\hfil
  \subfloat{\includegraphics[width = 0.33\textwidth]{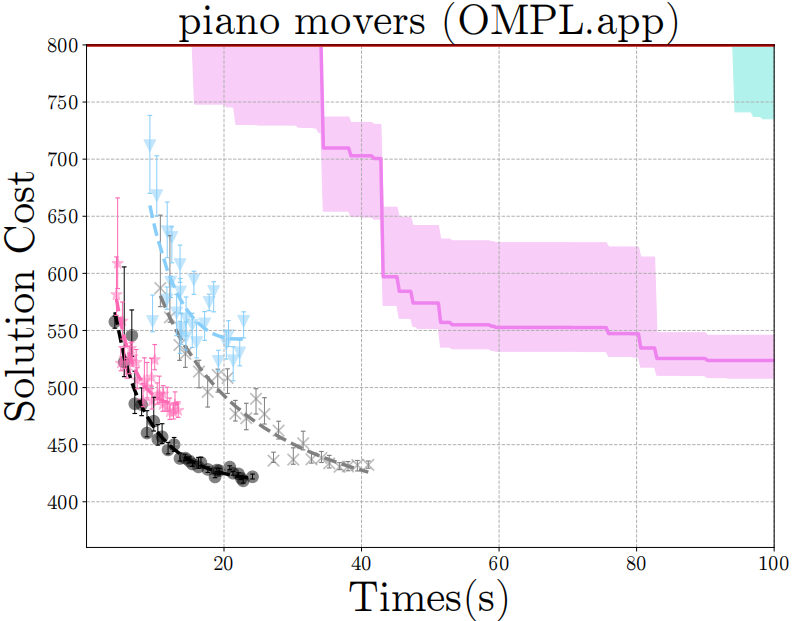}}
  \subfloat{\includegraphics[width = 0.33\textwidth]{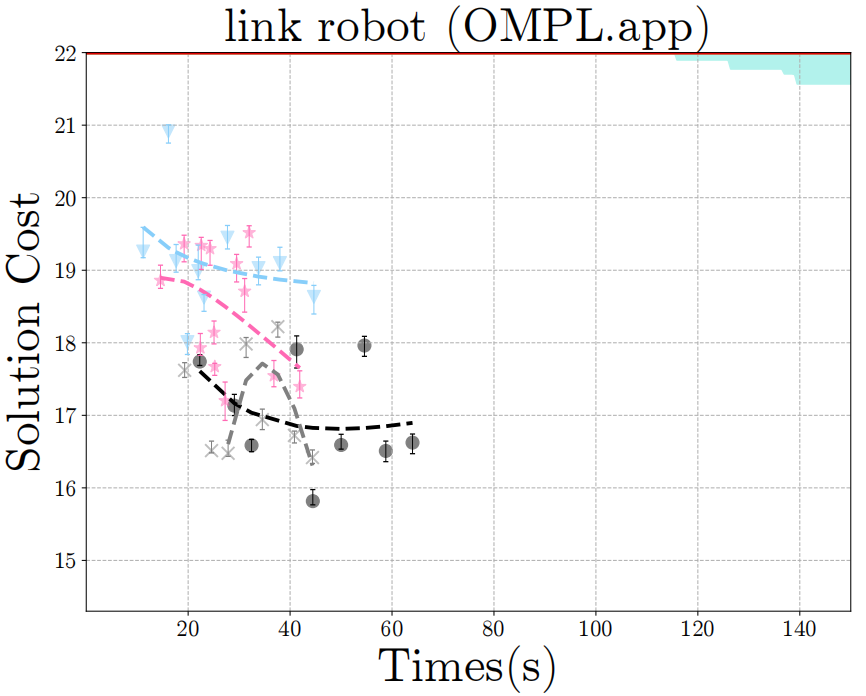}}
  
  \subfloat{\includegraphics[width = 0.75\textwidth]{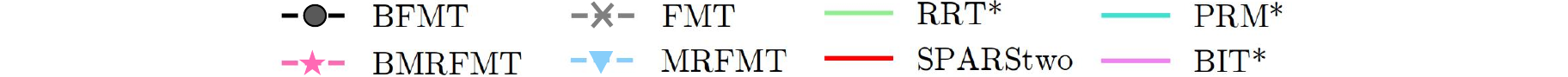}}
\caption{Planner performance versus time. Each planner was run 50 different times. The median path length is plotted versus run time/sample count for each planner, with unsuccessful trials assigned infinite cost.
\textcolor{black}{The median values are plotted with error bars denoting a non-parametric 95$\%$ confidence interval on the median. The dashed lines are regression lines fitted to the points associated with a given planner.}
}
\label{fig:ResultVersusTime}
\end{figure*}

\begin{figure*}[h]
\centering
  \subfloat{\includegraphics[width = 0.33\textwidth]{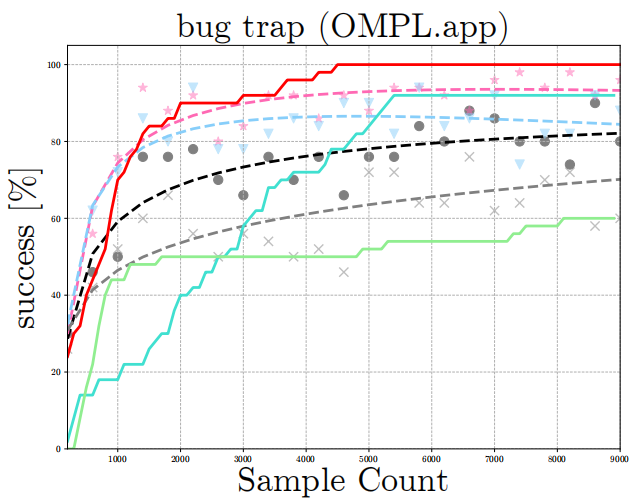}}\hfil
  \subfloat{\includegraphics[width = 0.33\textwidth]{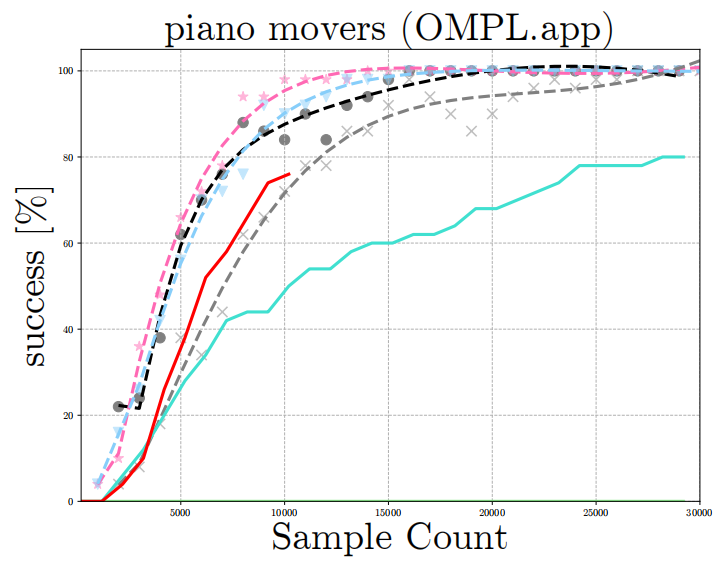}}\hfil
  \subfloat{\includegraphics[width = 0.33\textwidth]{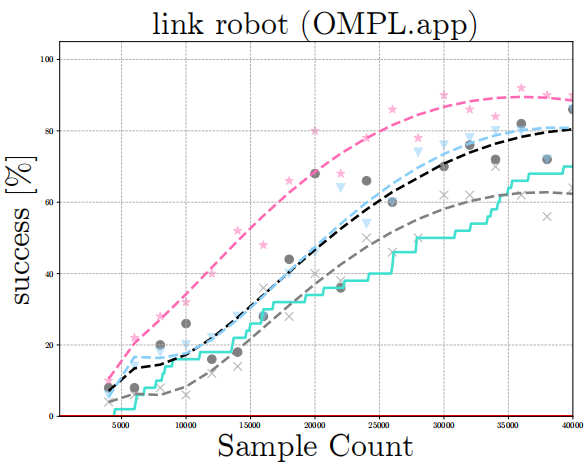}}
  
  \subfloat{\includegraphics[width = 0.33\textwidth]{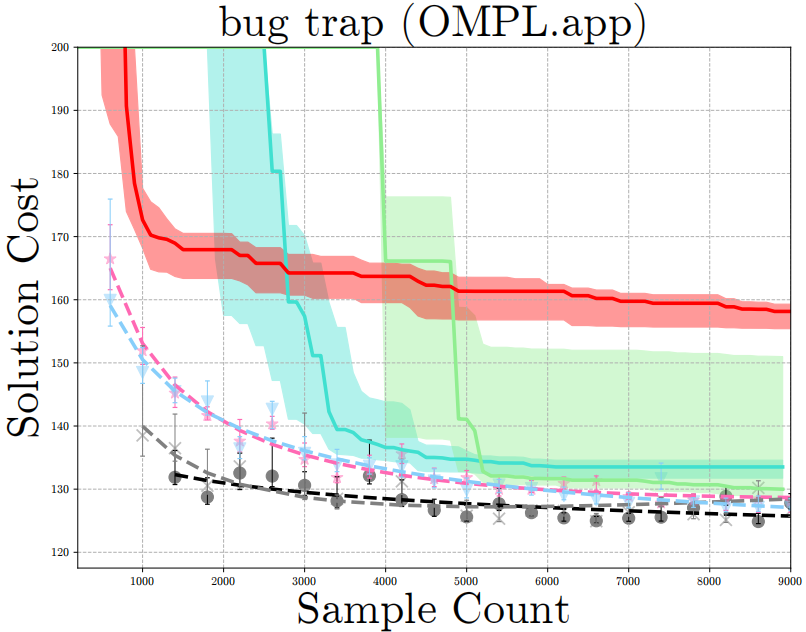}}\hfil
  \subfloat{\includegraphics[width = 0.33\textwidth]{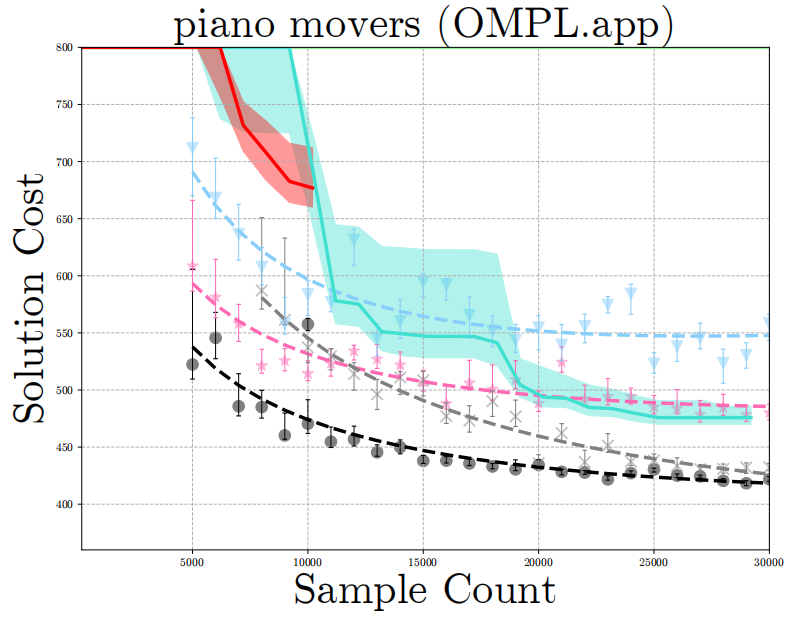}}\hfil
  \subfloat{\includegraphics[width = 0.33\textwidth]{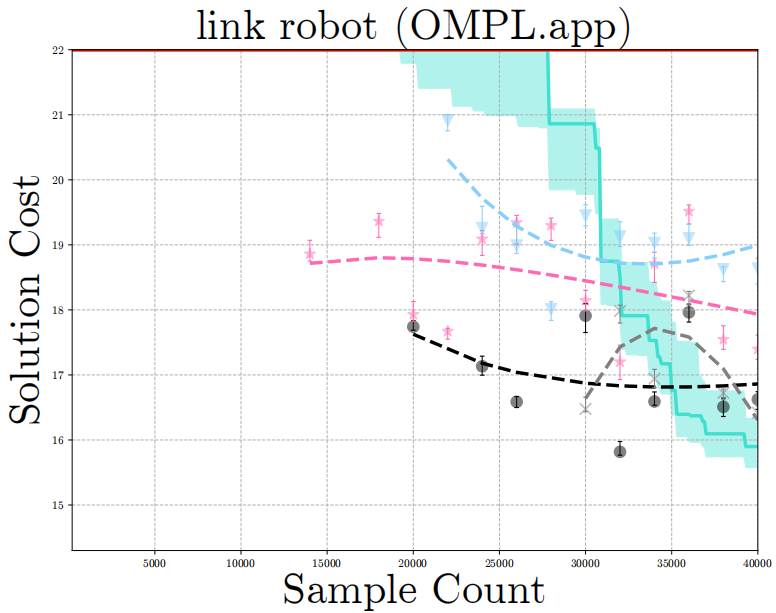}}
  
  \subfloat{\includegraphics[width = 0.33\textwidth]{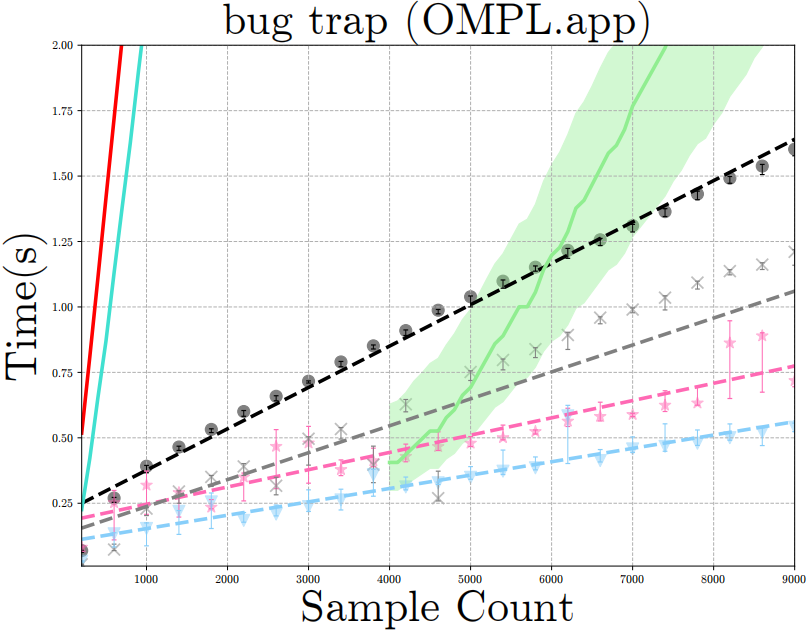}}
  \subfloat{\includegraphics[width = 0.33\textwidth]{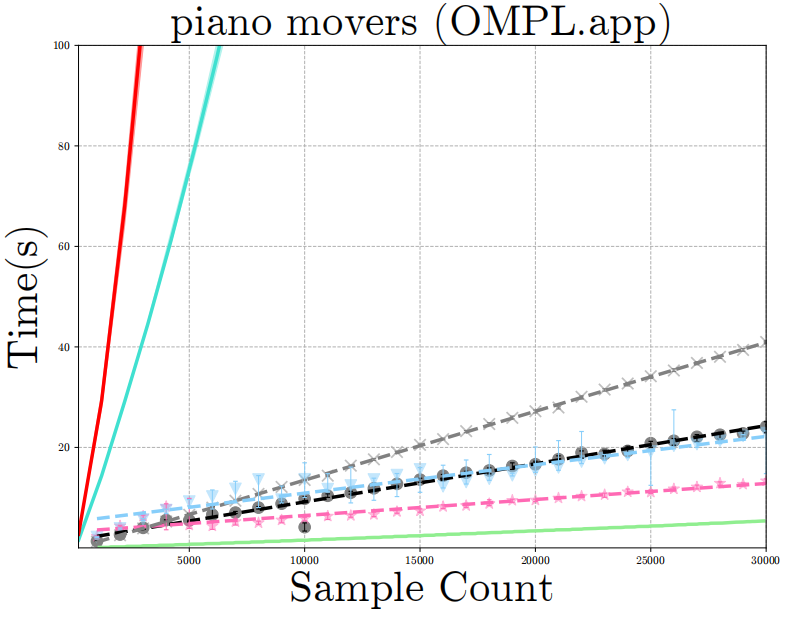}}\hfil
  \subfloat{\includegraphics[width = 0.33\textwidth]{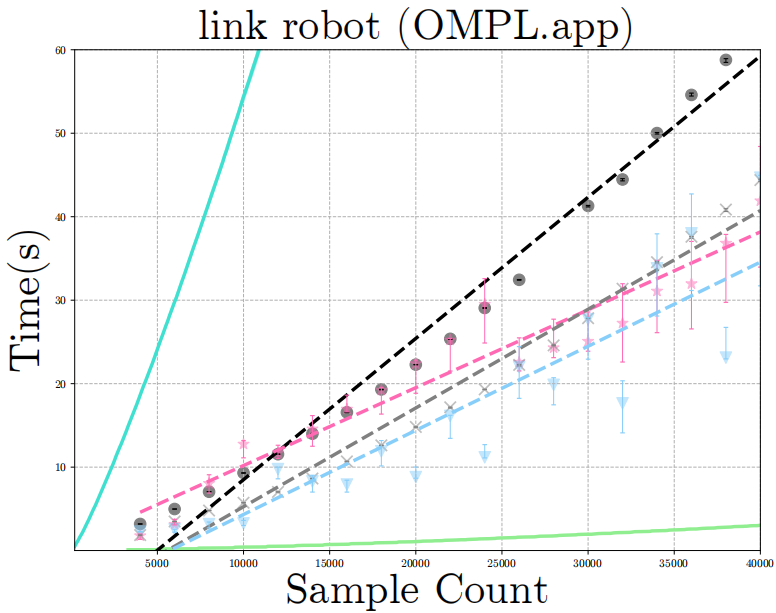}}

  \subfloat{\includegraphics[width = 0.75\textwidth]{Figs/label.pdf}}
\caption{Planner performance versus sample count. Each planner was run 50 different times. The median path length is plotted versus run time/sample count for each planner, with unsuccessful trials assigned infinite cost. 
\textcolor{black}{The median values are plotted with error bars denoting a non-parametric 95$\%$ confidence interval on the median.
The dashed lines are regression lines fitted to the points associated with a given planner.}
}
\label{fig:ResultVersusSample}
\end{figure*}

We present the simulation results for success rate and solution cost versus time in Fig. \ref{fig:ResultVersusTime}. 
Each point of a non-incremental planner represents the results of 50 simulations with the exact sample count.
Since sample count is the primary parameter for non-incremental planners, we also examine its impact on planning performance in Fig. \ref{fig:ResultVersusSample}, which illustrates the simulation results as a function of sample count for non-incremental planners and as a function of planning iteration for incremental planners. 

The simulation results in Fig. \ref{fig:ResultVersusTime} indicate that MRFMT$^*$ and BMRFMT$^*$ are the fastest in achieving high success rates across all problems,
indicating an improved convergence rate.
\textcolor{black}{Their success rates converge to 1 given larger planning time, validating their probability completeness.} Regarding solution cost, they converge to the optimal solution more quickly than all other planners, except for FMT$^*$ and BFMT$^*$. 
This is because they prioritize searching through sparser graph layers as long as they remain connected in free space, sacrificing optimality for increased efficiency. 
\textcolor{black}{ However, for simple problems such as the bug trap problem, MRFMT$^*$ and BMRFMT$^*$ converge to the optimal solution rapidly, even when compared with FMT$^*$ and BFMT$^*$.}
The success rate and time versus sample count in Fig.\ref{fig:ResultVersusSample} demonstrate that MRFMT$^*$ and BMRFMT$^*$ achieve higher success rates with lower time costs compared to FMT$^*$ and BFMT$^*$ at a given sample count. 
The higher success rate is attributed to their ability to simultaneously search through multiple graphs with varying densities and combine cross-layer subpaths in free space.
The speed advantage arises from the fact that FMT$^*$ and BFMT$^*$ perform a full expansion of every node, whereas MRFMT$^*$ and BMRFMT$^*$ primarily expand nodes on sparse graph layers when navigating through free space to quickly escape local minima, resorting to dense graph layers only when navigating through narrow passages.

Note that MRFMT$^*$ and BMRFMT$^*$ generate nearly identical cost-time curves in 2D problems. 
However, BMRFMT$^*$ outperforms MRFMT$^*$ in both solution cost and success rate in the Piano Mover's problem in $\mathbb{SE}(3)$ and the movable link robot problem in $\mathbb{R}^{14}$, where the volume of \textit{reachable configurations} around the goal is relatively small.
It suggests that the volume of reachable configurations significantly influences execution time. 
The relatively small volume implies that the backward tree of BMRFMT$^*$ expands its wavefront through fewer states than the forward tree of MRFMT$^*$. 
Additionally, the tree interconnection in the bidirectional case prevents the forward tree of BMRFMT$^*$ from growing too large compared to the unidirectional search of MRFMT$^*$, resulting in substantial computational cost savings in high-dimensional configuration spaces.

\section{Experiments}
\label{Section:Experiment}
\subsection{Experiment Setup}


\begin{figure*}
    \centering
   \subfloat[Start Position]{\includegraphics[width = 0.14\textwidth, height = 0.2\textwidth]{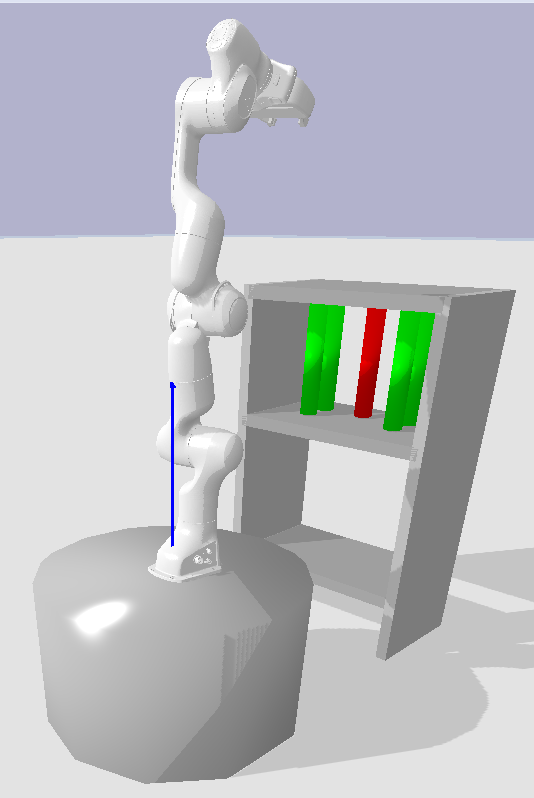}}\hspace{0.02\textwidth}
   \subfloat[Goal Position]{\includegraphics[width = 0.14\textwidth, height = 0.2\textwidth]{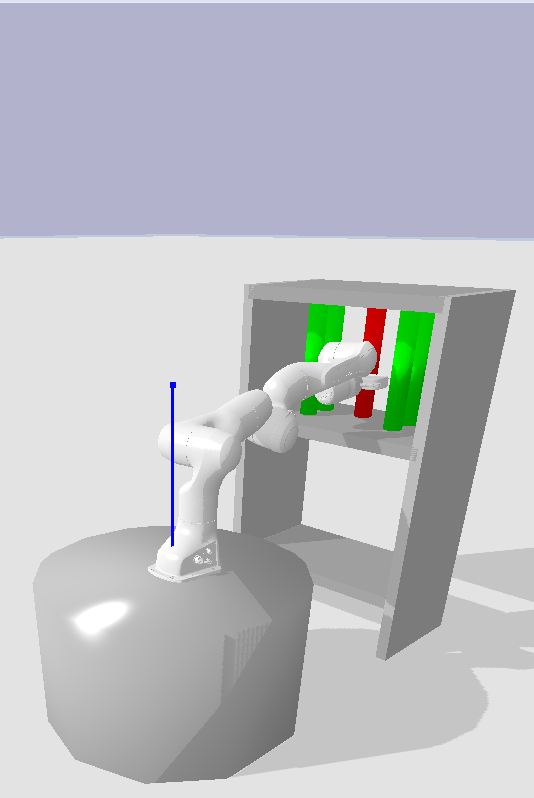}} \hspace{0.02\textwidth}
   \subfloat[A collision-free robot-arm trajectory found by MRFMT$^*$.]{\includegraphics[width = 0.64\textwidth]{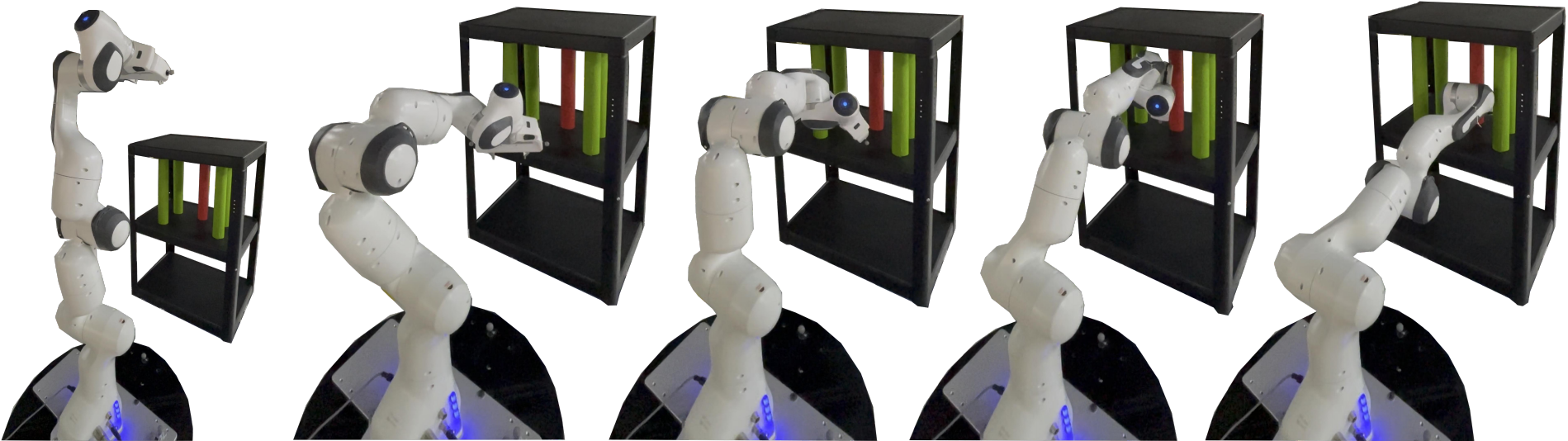}}
   
  \subfloat[Numerical results of the 7-DoF manipulation problem.]{\includegraphics[width = 0.8\textwidth]{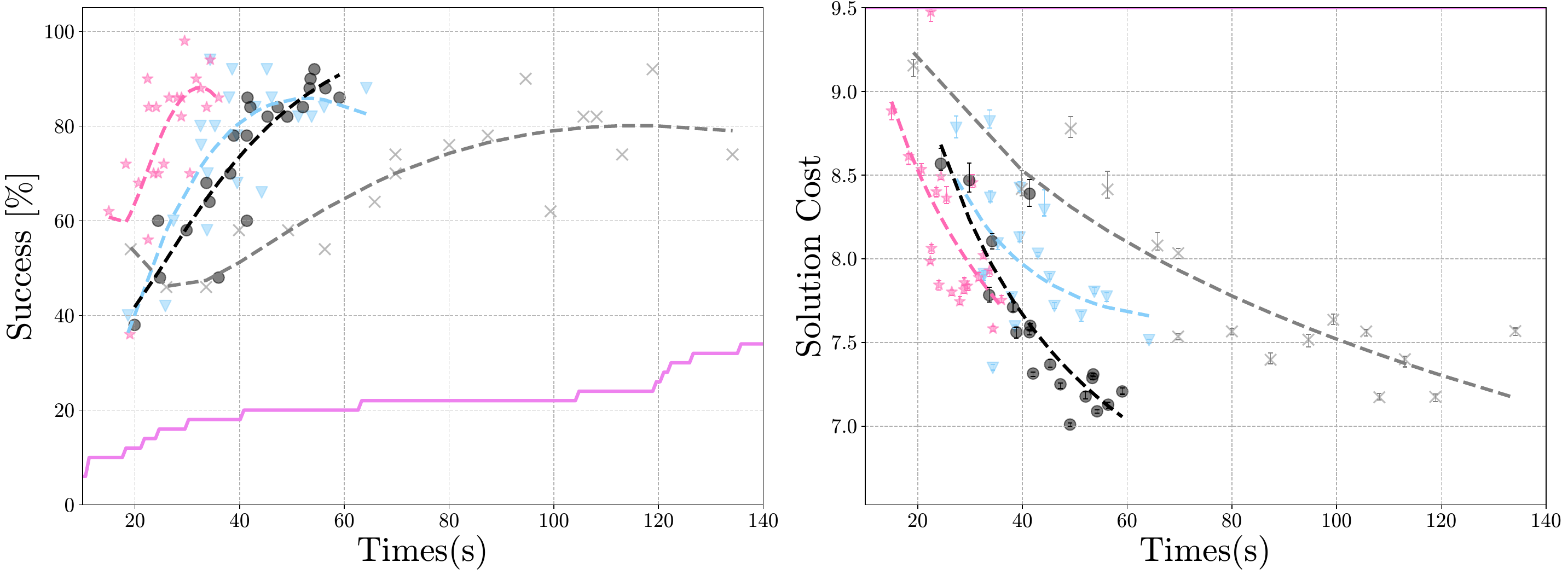}}
  
  \caption{Descriptions and results of a 7-DoF manipulation planning problem on the Franka Emika Panda robot. The robot arm is requested to grasp the red cylinder without removing the green cylinders, which is very challenging for traditional methods.}
  \label{fig:Franka_Problem}
\end{figure*}


  

We conducted experiments on the Franka Emika Panda robot equipped with a gripper to validate our proposed method in a realistic environment. 
In this experiment, the Franka robot was tasked with moving its gripper to retrieve a targeted red bottle located deep inside a shelf, while avoiding the green bottles positioned closely in front of it (see Fig.\ref{fig:Franka_Problem}(a)(b)). 
This scenario is particularly challenging due to the narrow passage available for the robot arm to maneuver. 
We compare MRFMT$^*$ and BMRFMT$^*$ with several OMPL benchmark planners.
The sample count for non-incremental planners was varied from 1,000 to 9,400. An r-disk graph with a radius of 3 was used as the underlying graph for all planners.
Both MRFMT$^*$ and BMRFMT$^*$ were evaluated with 4 resolutions. 

\subsection{Experiment Results and Discussions}
A solution provided by MRFMT$^*$ is displayed in Fig.\ref{fig:Franka_Problem}(c).
The numerical results from the 7-DoF manipulation experiments are summarized in Fig.\ref{fig:Franka_Problem}(d). 
Several planners are not displayed due to their low success rates within the allotted time.
The figure presents statistics over 50 runs for each planner. 
Notably, MRFMT$^*$ and BMRFMT$^*$ achieved the best performance among all planners, with BMRFMT$^*$ demonstrating reduced time to find feasible solutions.

\section{Discussions}
\label{Section:Discussions}

\subsection{Planning under Differential Constraints}
Motion planning for Driftless Control-Affine (DCA) systems is a classic problem in robotics.  
Some examples of DCA systems include
mobile robots with wheels that roll without slipping, 
Unmanned Aerial Vehicle (UAV) whose dynamics involve control inputs for thrust and moment generation, 
and humanoid Robotoid robots, like ASIMO or Atlas, when focusing on their walking and manipulation capabilities.
In \cite{DFMT}, a theoretical framework is proposed to assess optimality guarantees of sampling-based algorithms for planning under differential constraints. 
We can exploit the framework to extend MRFMT$^*$ and BMRFMT$^*$ to address the motion planning problem for DCA systems by
\begin{itemize}
    \item finding the neighboring samples with the same resolution by searching for the samples lying within the privileged coordinate box $Box^w(x, \gamma\left(\frac{\log{|n_l|}}{n_l}\right)^{\frac{1}{d}})$, where $Box^w(x, \epsilon)$ denotes the weighted box of size $\epsilon$ centered at $x\in\mathcal{X}$, $w={w_1,w_2,\cdots,w_d}$ denotes the weight vector at $x$, $\gamma$ is a constant defined by the user \footnote{For more detailed descriptions of the privileged coordinate box, please refer to \cite{DFMT,KinodynamicRRTstar}.}, and
    \item connecting samples by edges that are trajectories satisfying the differential constraints.
\end{itemize}
We show the planning results of MRFMT$^*$ in the Reeds-Sheep experiment conducted in the bug trap environment in Fig.\ref{fig:reedsheep_car}.
Admittedly, as FMT$^*$ and most other sampling-based algorithms, MRFMT$^*$ is not suitable for systems without direct access to the two-point boundary value solver.
We leave this problem to future research.
\begin{figure}
    \centering
    \includegraphics[width=0.48\linewidth]{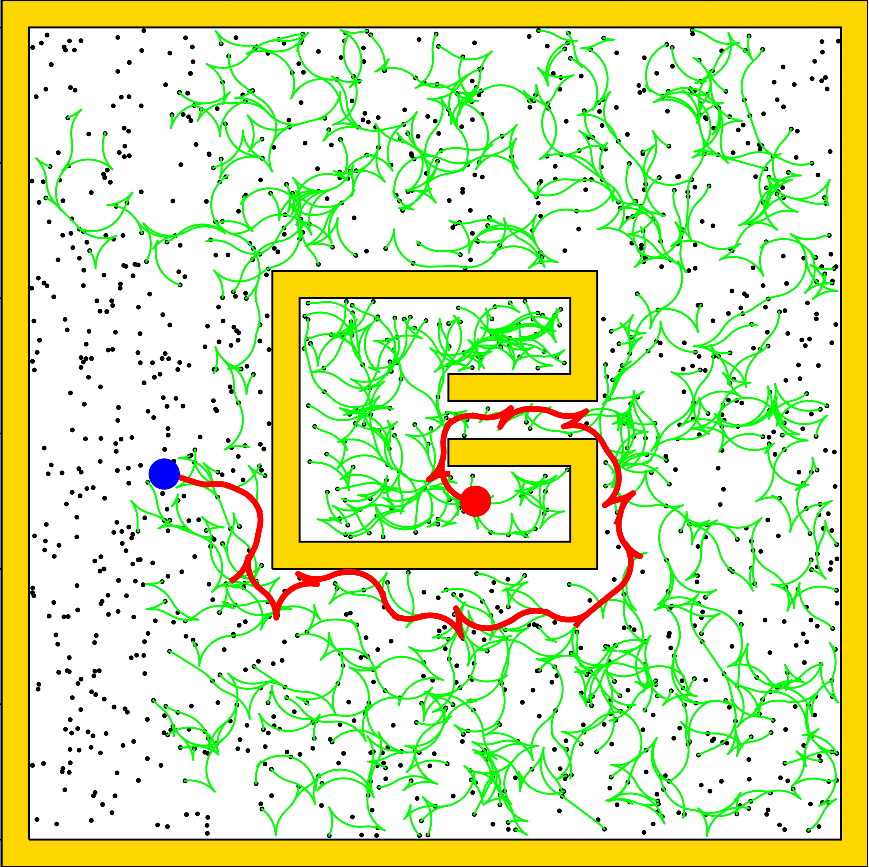}\hfill
    \includegraphics[width=0.48\linewidth]{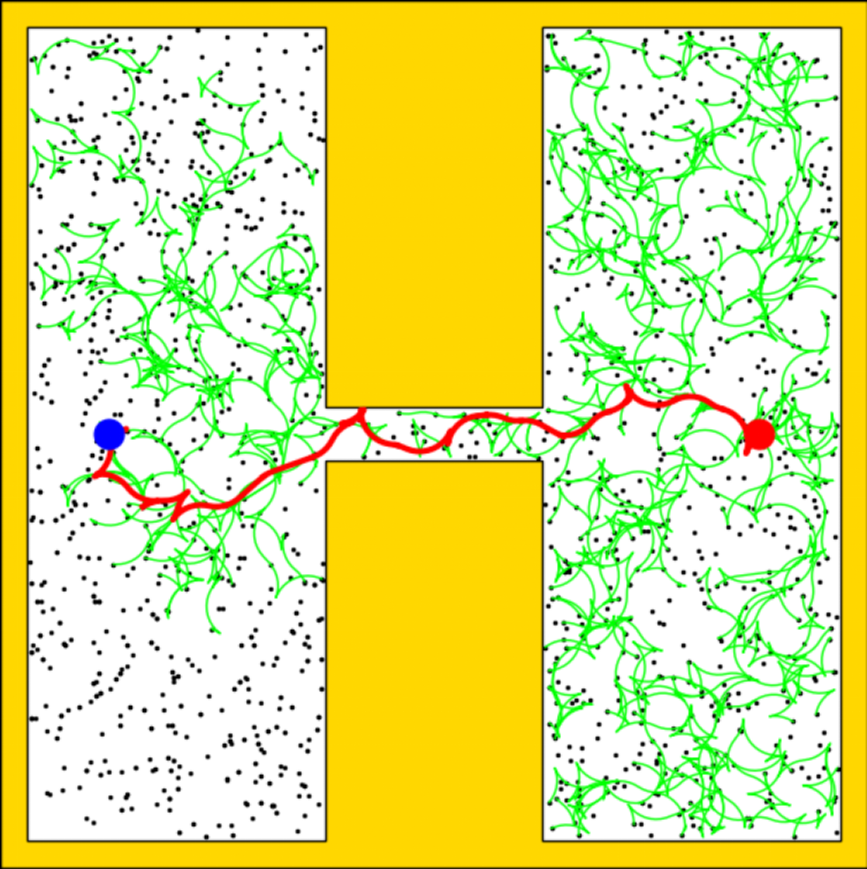}    
    \caption{The planning results of MRFMT$^*$ for a Reeds-Shepp car in different 2D environments with narrow passages.} 
    \label{fig:reedsheep_car}
\end{figure}

\subsection{Effects of the multi-resolution parameters}
Theorems \ref{correctness} and \ref{AO}, proved in Section \ref{Section:Analysis}, indicate that the structure of the underlying multi-layer graph significantly influences the performance of the proposed multi-resolution motion planners. 
To numerically investigate how performance varies with the underlying structure, we consider two types of graphs: one with linearly increasing layer density and another with exponentially increasing layer density. 
Specifically, for the graph with linearly increasing layer density, the number of nodes in the \( l \)-th layer is given by \( n_l = \lfloor \frac{l}{L}N \rfloor \).
In contrast, for the graph with exponentially increasing layer density, it is given by \( n_l = \lfloor \frac{1}{2^{L-l}}N \rfloor \). 


We compare MRFMT$^*$ across different graph structures, using FMT$^*$ as the benchmark for performance in solving motion planning problems in $\mathbb{SE}(2)$ and $\mathbb{SE}(3)$, as illustrated in Fig.\ref{fig:results_of_layered_graph}. 
To ensure consistency, we fixed the neighbor radius and used the same random seed when generating the search graphs, which allows the densest layer of MRFMT$^*$ to match the search graph of FMT$^*$.
Our results demonstrate that MRFMT$^*$ achieves the same success rate as FMT$^*$ across various graph structures, which proves the solution completeness property of MRFMT$^*$. 
Notably, MRFMT$^*$ with linearly increasing layer density yields the lowest solution cost. 
Conversely, although MRFMT$^*$ with exponentially increasing layer density incurs a higher solution cost due to its sparse graph structure, it requires less time to find a feasible solution.
These findings suggest that the multi-resolution parameters controlling sample density across different layers should be carefully selected to balance solution cost and planning time based on task requirements. 
However, the determination of optimal parameters is beyond the scope of this paper.

\begin{figure*}[h]
\centering
  \subfloat{\includegraphics[width = 0.24\textwidth]{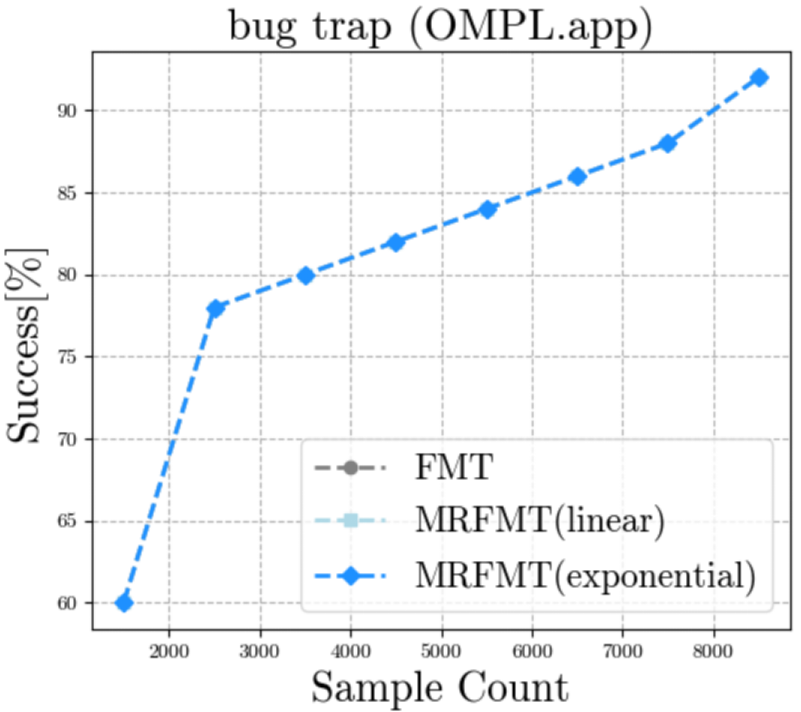}}\hfil
  \subfloat{\includegraphics[width = 0.24\textwidth]{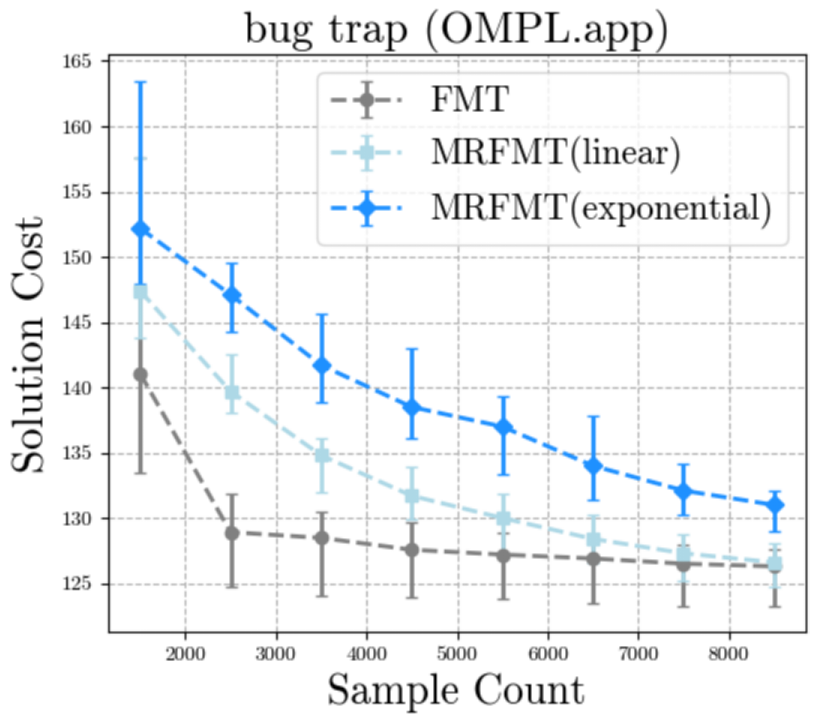}}\hfil
  \subfloat{\includegraphics[width = 0.24\textwidth]{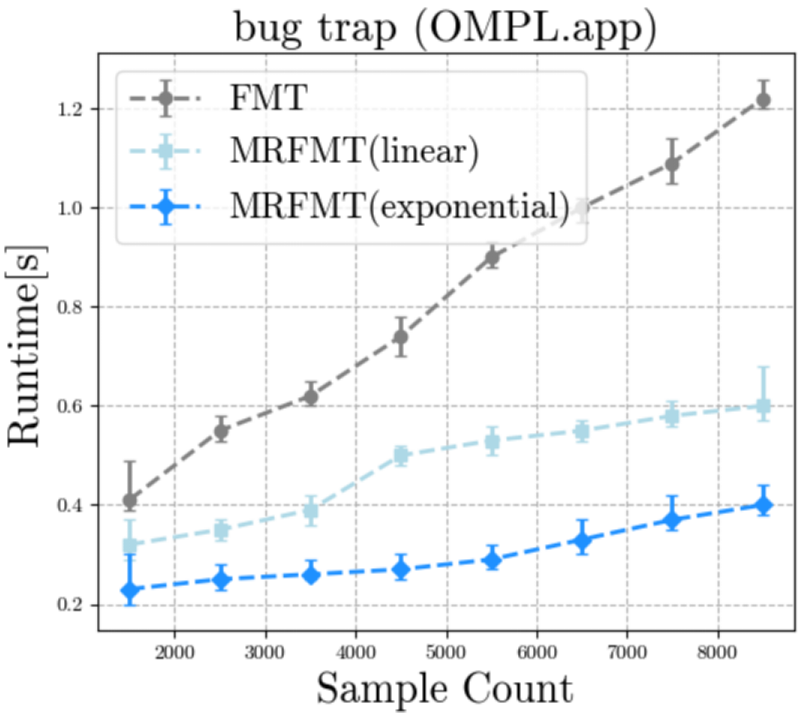}}
  
  \subfloat{\includegraphics[width = 0.24\textwidth]{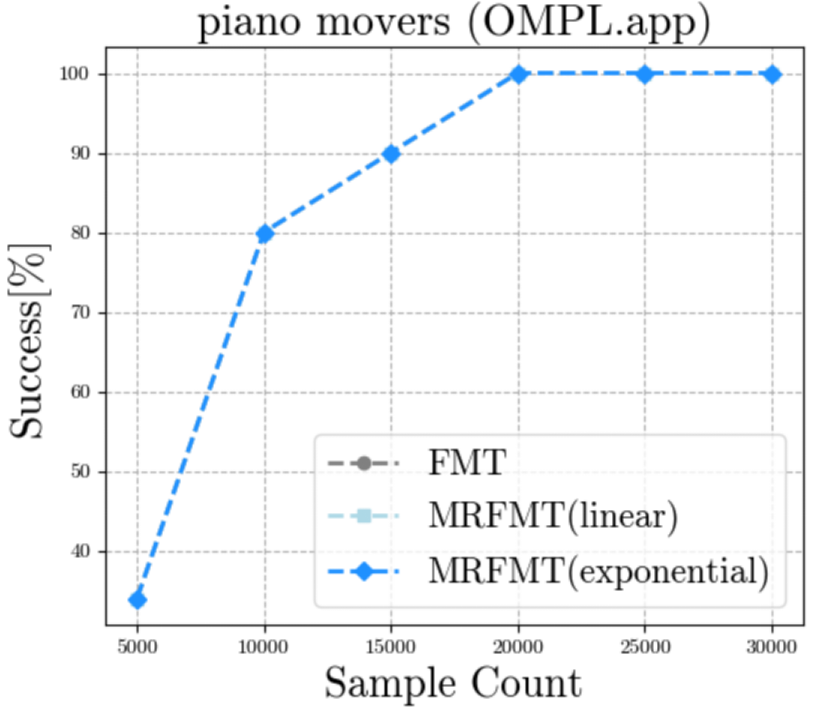}}\hfil
  \subfloat{\includegraphics[width = 0.24\textwidth]{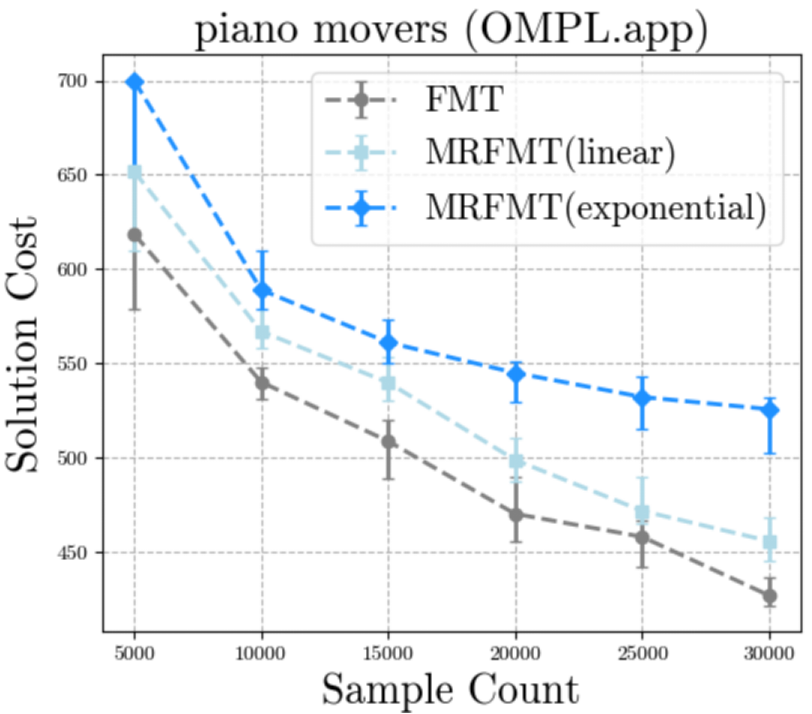}}\hfil
  \subfloat{\includegraphics[width = 0.24\textwidth]{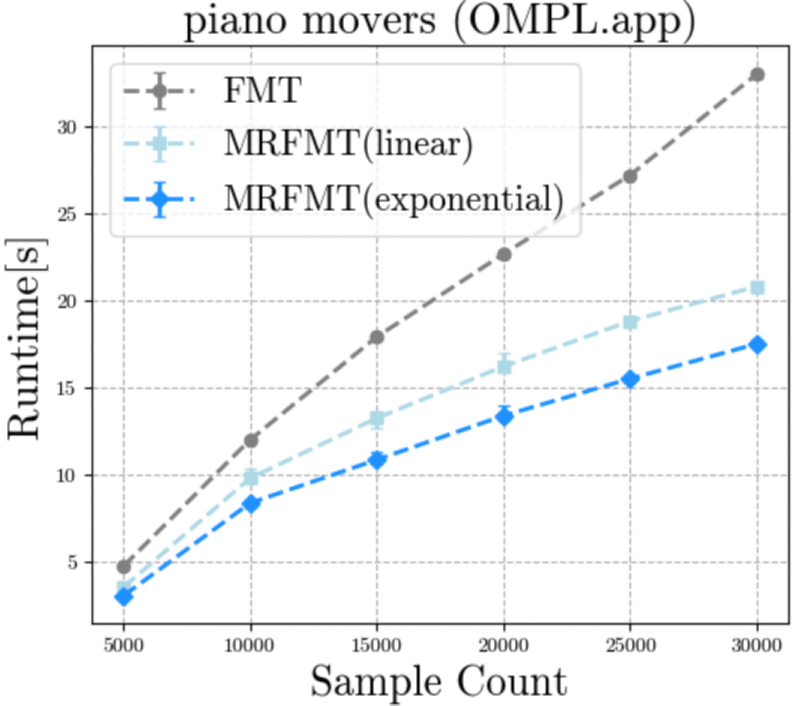}}
 
\caption{Planner performance versus sample count. Each planner was run 50 different times. 
The median values are plotted with error bars denoting a non-parametric 95$\%$ confidence interval on the median.
\textcolor{black}{The lines in the success rate vs. sample count plot overlap.}
}

\label{fig:results_of_layered_graph}
\end{figure*}

\section{Conclusion}
\label{Section:Conclusion}
We proposed MRFMT$^*$, an asymptotically optimal sampling-based planner with the selective densification strategy. 
MRFMT$^*$ is able to seamlessly transition between sparse and dense probabilistic approximations of configuration spaces, enabling it to achieve fast performance by searching over sparser approximations to navigate through large free state space and only densifying when tackling narrow passages.
The bidirectional version of MRFMT$^*$ further reduces search cost by simultaneously propagating search wavefront from two sources.
We present a theoretical analysis for MRFMT$^*$ regarding its completeness and optimality.
The simulation and experiment results show that MRFMT$^*$ and its bidirectional version can perform rapid online planning in high-dimensional state spaces with narrow passages by combining the advantages of planners with various granularities.
With their adaptability, the proposed planners can be easily implemented as a plug-and-play solution for diverse robotic systems, such as humanoid robots and
robotic arms, enabling them to perform tasks automatically and efficiently in unstructured environments.

\end{document}